\algnewcommand{\LeftComment}[1]{\Statex \(\triangleright\) #1}
\newcommand{\algmargin}{\the\ALG@thistlm}
\newlength{\whilewidth}
\algnewcommand{\parState}[1]{\State%
	\parbox[t]{\dimexpr\linewidth-\algmargin}{\strut #1\strut}}
\let\emptyset\varnothing
\def\cC{\mathcal C}
\def\cS{\mathcal S}
\def\cP{\mathcal P}
\def\bS{\bf S}
\newenvironment{customthm}[1]
{\innercustomthm}
{\endinnercustomthm}
\newenvironment{customlem}[1]
{\innercustomlem}
{\endinnercustomlem}
\newtheorem{lemma}{Lemma}
\newtheorem{theorem}{Theorem}
\newcommand\scalemath[2]{\scalebox{#1}{\mbox{\ensuremath{\displaystyle #2}}}}
\newcolumntype{x}[1]{>{\centering\arraybackslash\hspace{0pt}}m{#1}} % more compact table columns
\newcounter{magicrownumbers} % automatic table row numbering
\newcommand{\cmark}{\color{green}\ding{51}}%
\newcommand{\xmark}{\color{red}\ding{55}}%
\begin{document}

\title{Autonomous Recharging and Flight Mission Planning for Battery-operated Autonomous Drones}

\author{Rashid Alyassi$^*$, Majid Khonji$^*$, Areg Karapetyan, Sid Chi-Kin Chau, {\em Senior Member}, {\em IEEE}, Khaled Elbassioni and Chien-Ming Tseng% <-this % stops a space
\thanks{This work was supported by the Khalifa University of Science and Technology under Award Ref. CIRA-2020-286.}
\thanks{R. Alyassi, M. Khonji, K. Elbassioni and A. Karapetyan are with the EECS Department, Khalifa University, Abu Dhabi, UAE. (e-mails: \{rashid.alyassi, majid.khonji, khaled.elbassioni, areg.karapetyan\}@ku.ac.ae)}
\thanks{S. C.-K. Chau is with the Australian National University. (e-mail: sid.chau@anu.edu.au)}
\thanks{C.M. Tseng is with Ubiquiti Inc, Taiwan.}
\thanks{$^*$These authors contributed equally to the paper.}

 }

% make the title area
\maketitle

%\IEEEpeerreviewmaketitle

\begin{abstract}
Unmanned aerial vehicles (UAVs), commonly known as drones, are being increasingly deployed throughout the globe as a means to streamline monitoring, inspection, mapping, and logistic routines. When dispatched on autonomous missions, drones require an intelligent decision-making system for trajectory planning and tour optimization. Given the limited capacity of their onboard batteries, a key design challenge is to ensure the underlying algorithms can efficiently optimize the mission objectives along with recharging operations during long-haul flights. With this in view, the present work undertakes a comprehensive study on automated tour management systems for an energy-constrained drone: (1) We construct a machine learning model that estimates the energy expenditure of typical multi-rotor drones while  accounting for real-world aspects and extrinsic meteorological factors. (2) Leveraging this model, the joint program of flight mission planning and recharging optimization is formulated as a multi-criteria Asymmetric Traveling Salesman Problem (ATSP), wherein a drone seeks for the time-optimal energy-feasible tour that visits all the target sites and refuels whenever necessary. (3) We devise an efficient approximation algorithm with provable worst-case performance guarantees and implement it in a drone management system, which supports real-time flight path tracking and re-computation in dynamic environments. (4) The effectiveness and practicality of the proposed approach are validated through extensive numerical simulations as well as real-world experiments.
\end{abstract}\def\abstractname{Note to Practitioners}%\begin{abstract}
	\begin{IEEEkeywords}
		Unmanned Aerial Vehicles, Flight Mission Planning, Recharging Optimization, Power Consumption Modeling, Approximation Algorithm, Traveling Salesman Problem.
	\end{IEEEkeywords}
\section{Introduction}
\IEEEPARstart{W}{ithin} future smart cities and ecosystems, autonomous drones are often envisaged as key-enabling technologies that would bolster and refine a range of vital services and procedures, including environmental surveillance, search and rescue operations, traffic monitoring and logistics~\cite{AFRBook2010,SebbaneBook2015}. This outlook rests on drones' salient properties: (1) {\em Energy-efficiency}: Small drones typically consume less energy per package-km than delivery trucks \cite{SSOLMC18drones}. They are particularly energy-efficient for transporting lightweight items in short trips (within 4 km), whereas ground vehicles are useful for carrying heavier objects over long distances.
(2) {\em Agility}: Unlike on the ground, there is little restriction and fewer obstacles in the sky, hence drones can travel across space in straight paths with nimble navigation.
(3) {\em Swiftness}: Aerial transportation is usually not hampered by traffic congestion. Thus, the flight duration  is mostly reflected by the distance traveled. Drones can also be rapidly launched via catapults and drop payloads by parachutes in response to time-critical situations.
(4) {\em Safeness}: As there is no on-board human operator, UAVs are particularly appealing for mission tasks that are hazardous, contagious or lethal. Furthermore, in applications such as transport of medical supplies or disaster management, aerial drones may conduce to saving lives. (5) {\em Low-cost}: Drone technologies have matured over time and price has dropped due to economies of scale. The adoption of customized drone systems can yield notable cost reduction in a number of applications, including field spraying, surveillance in precision agriculture, monitoring of difficult-to-access infrastructure as well as parcel deliveries~\cite{otto2018optimization}.

Nevertheless, drones are plagued with several operational challenges, such as limited battery endurance, meagre loadability and sensitivity to ambient factors~\cite{SSOLMC18drones, CHEN2021102214}. In particular, typical drones are only suitable for \textit{short-range trips}, which restrains their applicability in persistent, long-distance mission tasks. Incidentally, though drones are expected to travel within certain high altitudes, they are largely susceptible to \textit{wind conditions} (especially in light of their sheer weight).
%These phenomena are highly dynamic and should be accounted for in a real-time manner.	

Most of the existing planners for optimizing UAV routing focus on short-distance (reachable within the maximum flight time), recurrent tours that can be performed in parallel by a fleet of drones (e.g., for last-mile package deliveries). While beneficial on their own, these methods are not readily amenable to fuel-constrained, distant flight trips which demand a battery endurance model and entail an additional overhead for managing the recharging decisions. Such flights are often associated with \textit{solitary or non-parallelizable} missions involving monolithic workflows or interdependent subtasks. One practical example concerns police patrols, which are typically discerned based on the pre-assigned monitoring area or district. Currently, UAE is actively utilizing drones to accompany police cars or independently monitor specific sites of interest~\cite{uaedrone4400,while2021urban}. Another example relates to drone-based postal/governmental services being tested in UAE, France, UK, and Japan~\cite{frpostal, ukpostal,jppostal, uaepostal}. As is customary in these procedures, UAV's flight trajectory would likely consist of a sequence of visits to designated locations, wherein the shipment can undergo the prescribed successive pipeline for authentication and/or certification (e.g., stamp collection for a permit/document).

Without a reliable and inclusive battery consumption model in place, long-distance drone routing is prone to disruptions and exorbitant operational costs. Arguably, an ill-estimated battery charge level (e.g., owing to overlooked adversarial weather conditions) might degrade service quality or even lead to mission failure (e.g., when a drone runs out of charge before reaching a charging station or returning to the depot). On the other hand, frequent recharging detours might incur unnecessary delays and excessive energy expenditure.

In response to these demerits, we synthesize and experimentally validate an effective multifaceted tour management framework for energy-constrained, long-haul drone routing applications. Specifically, the present study complements and advances the relevant literature with the following four-fold contributions:
\begin{itemize}
	\item[\small\ding{228}] Drawing on intensive experimentation and analysis, we design a first-order regression model for estimating energy consumption of typical multi-copter UAVs. The model accommodates wind speed and direction, UAV motion kinematics and payload mass, allowing for sufficiently accurate yet computationally inexpensive battery performance estimation in complex real-world environments (within 5\% deviation error, as validated on three different drones).
	\item[\small\ding{228}] With the established model, the energy-constrained tour management problem is formulated as a multi-objective extension of ATSP, in which a drone is tasked with visiting the chosen (possibly distant) target locations while maintaining battery state-of-charge (SoC) within limits. The objective function aims at minimizing the total trip duration; the flight time plus the duration of recharging operations. 
	\item[\small\ding{228}] We devise an efficient \textit{approximation algorithm} inducing near-optimal (within an asymptotic constant factor) tandem of flight mission decisions and charging strategies. The average-case performance and scalability of the algorithm are demonstrated through numerical simulations.
	\item[\small\ding{228}] We implement the proposed approach in a drone management system that supports real-time flight path tracking and re-computation in dynamic environments. Subsequently, simulation studies and real-world experiments are provided to corroborate the effectiveness and practicality of the featured planner.
\end{itemize}

\iffalse
\begin{enumerate}
\item We conduct empirical studies to model the power consumption of drones, considering various flight scenarios.  Accurate model of battery performance in different scenarios allows further flight mission planning and recharging optimization for drones.
\item We formulate a joint problem of flight mission planning and recharging scheduling for drones, using a calibrated power consumption model of a drone. The objective is to complete a tour mission for a set of sites of interest in the shortest time, considering recharging possibilities.  This problem captures diverse application domains such as logistics and remote surveillance. 
\item   We devise algorithms for solving the problem of flight mission planning and recharging scheduling. We subsequently implemented the algorithms in a  drone management system, which supports real-time flight path tracking and re-computation in dynamic environments. Performance of the proposed approach is analytically established and experimentally corroborated.
\item To allow fully autonomous recharging of drones,  we developed a robotic charging station prototype that can recharge drones in autonomous fashion. %Autonomous recharging can significantly extend the battery lifetime of drones for long-distance missions.
\end{enumerate}
\fi

\section{Related Work}

% There are diverse applications for drones, including delivery (e.g., for light-weight parcels, medical items, mail) and remote operations (e.g., wildlife surveillance, environmental surveying, search and rescue operations). Drones had been often studied for their flight and landing control mechanisms. For example, see the books \cite{AFRBook2010,SebbaneBook2015} for a good overview of the recent results.  

The extant literature on drones can be thematically organized into two major threads: (1) low-level transient control of flight operations, for instance, modulating propellers and maintaining balance through PID and MPC controllers~\cite{siegwart2004pid,bregu2016dronecontrol}, and (2) high-level planning and management of drone missions, for example, obstacle avoidance, localization and mapping, and trajectory planning~\cite{SNSbook}. In the latter theme, most prior research on UAV routing, such as the works in~\cite{7513397, JEONG2019220, SONG2018418, 9151388, Torabbeigi2020, CHENG2020364}, focused on short-distance fleet-based flight scenarios intended primarily for last-mile parcel deliveries. To overcome the range barrier, the studies in~\cite{sundar2013algorithms, 9099809, 8842627} propose different extended setups, involving mobile assisting platforms or multiple spatially distributed battery charging/swapping stations. Yet, the approaches developed therein lack optimality guarantees (except the one in~\cite{sundar2013algorithms}) and leave unexplored the potential of optimizing the duration of recharging operations (hence the consumed energy). Table~\ref{tab:paper-compare} provides a further comparison between the present paper and the aforementioned studies. 

% considered mission planning problems for drones with different setups and scopes. However, these studies resort to heuristic-based approaches in solving the problem. On the other hand, this work provides a more rigorous solution by an approximation algorithm with a theoretically proven approximation guarantee. 
    
    \begin{table*}[t]
    \renewcommand{\arraystretch}{1}
    	\centering
    	\resizebox{0.99\textwidth}{!}{
    		\begin{tabular}{|l|c|c|c|c|c|c|c|}
    			\hline
    			& \begin{tabular}[c]{@{}c@{}} Problem Setup\end{tabular} & \begin{tabular}[c]{@{}c@{}}Power \\Consumption\\ Estimation\end{tabular} & \begin{tabular}[c]{@{}c@{}}Recharging\\ Optimization\end{tabular} & \begin{tabular}[c]{@{}c@{}}Experimental\\ Validation\end{tabular} & \begin{tabular}[c]{@{}c@{}}Optimality\\ Guarantees\end{tabular} & \begin{tabular}[c]{@{}c@{}}Intended Flight Trajectory \end{tabular} & 
    			\begin{tabular}[c]{@{}c@{}}Application \end{tabular} \\\hline
    			Sundar and Rathinam~(2013)~\cite{sundar2013algorithms}& Single UAV, multi-CS &  \xmark& \xmark & \xmark & \cmark  &  Long-distance, single trip & Multi-domain\\ \hline
    			Dorling et. al.~(2017)~\cite{7513397}&  Multi-UAV, single CS & White-box model &  \xmark&  \xmark & \xmark & Short-range, multi-trip & Logistics\\ \hline
    			Song et. al.~(2018)~\cite{SONG2018418}& Multi-UAV, multi-CS & \xmark & \xmark & \xmark & \xmark & Short-range, multi-trip & Logistics\\ \hline
    			{Vivaldini et.al.~(2019)~\cite{Vivaldini2019}}& Single UAV, single CS & \xmark & \xmark & \cmark & \xmark  & Short-range, single trip & Monitoring\\ \hline
    			{Jeong et.al.~(2019)~\cite{JEONG2019220}}& Single UAV, mobile BSS & White-box model & \xmark & \xmark & \xmark  & Short-range, multi-trip & Logistics\\ \hline
    			Shao et. al.~(2020)~\cite{9099809}&  Single UAV, multi-BSS & \xmark &  \xmark& \xmark & \xmark & Long-distance, single trip & Logistics\\ \hline
    			Ribeiro et. al.~(2020)~\cite{8842627}& Multi-UAV, multi-CS & \xmark & \xmark & \xmark & \xmark & Long-distance, single trip & Monitoring \& inspection\\ \hline
    			Torabbeigi et. al.~(2020)~\cite{Torabbeigi2020}& Multi-UAV, single CS & Black-box model & \xmark & \xmark & \xmark &  Short-range, single trip & Logistics\\ \hline
    			Cheng et. al.~(2020)~\cite{CHENG2020364}&  Multi-UAV, single CS& White-box model & \xmark & \xmark & \xmark & Short-range, multi-trip & Logistics \\  \hline
    			{Huang et.al.~(2021)~\cite{9151388}}& Single UAV, mobile TLP& \xmark & \xmark &  \xmark &  \xmark  & Short-range, multi-trip & Logistics\\ \hline\hline
    			\multicolumn{1}{|c|}{\multirow{2}{*}{Present work}}&  Single UAV, multi-CS & \multirow{2}{*}{Black-box model} & \multirow{2}{*}{\cmark} & \multirow{2}{*}{\cmark} & \multirow{2}{*}{\cmark} & \multirow{2}{*}{Long-distance, single trip} & \multirow{2}{*}{Multi-domain}\\  
    			\multicolumn{1}{|c|}{}&  {Multi-UAV, shared multi-CS (see Sec.~\ref{exte})} &  &  &  &  &  & \\\hline
    		\end{tabular}
    	}
    	\caption{A comparative summary of related literature on UAV routing and mission planning problems. The acronyms CS, BSS and TLP stand for charging station, battery swap station and takeoff and landing platform, respectively.}
    	\label{tab:paper-compare}%\vspace{-2mm}
    \end{table*}
    
Meanwhile over the recent past, several general-purpose planners for autonomous robots, such as Kongming~\cite{li2008generative}, p-Sulu \cite{ono2013probabilistic}, COLIN~\cite{coles2008planning} or ScottyActivity~\cite{fernandez2018scottyactivity}, have emerged in the Artificial Intelligence  community. An important characteristic of COLIN and ScottyActivity is that they do not require time discretization. This is essential for efficient planning in scenarios with long operational horizons and activities with multiple time scales. Although these approaches have significantly increased expressivity of the problems that could be modeled, there is a lack of theoretical understanding of solution quality and running time. In particular, they often rely on Heuristic Forward Search (e.g. as in
COLIN and ScottyActivity), and Integer Programming (e.g., as in p-Sulu), both of which are provably unscalable in many domains. This work explores a different approach by exploiting problem structure with a deeper understanding from a theoretical perspective. 

The problem under study is linked to several variants of the Vehicle Routing Problem (VRP), namely Solar-VRP~\cite{LGMJKLY2016ev}, Hybrid-VRP~\cite{ckt2016dm}, Green-VRP~\cite{erdougan2012green}, and Electric-VRP~\cite{ham2021electric}. Solar-VRP seeks to optimize the route and speed of a solar-powered electric vehicle in a single-source single-destination tour such that power consumption is balanced by harnessed solar energy. In Hybrid-VRP, the objective instead is to minimize the fuel consumption of a hybrid (electric and fuel powered) vehicle by modulating its driving mode based on trip information and finding an optimal path considering intermediate filling and charging stations. Green-VRP, Electric-VRP and variants thereof are concerned with optimal routing and refueling planning of a fleet of alternative fuel-powered vehicles considering the associated environmental and financial tolls. Unlike the setting studied here, the vehicle energy/fuel consumption rate is assumed constant in the preceding line of studies. For current purposes, we cast the energy-constrained drone tour management program as an extended version of the Fuel-constrained UAV Routing Problem (FCURP) studied in~\cite{sundar2013algorithms}. Therein, the authors develop an approximation algorithm for FCURP, based on the approach in~\cite{KMM2011gsp} for the special case of the problem with symmetric travel costs. Building upon these methods, we devise an asymptotic constant-factor approximation algorithm for multi-objective FCURP and validate its practicality experimentally through real-world trials.

%We extend several ideas from theoretical computer science literature and apply them to the problem of flight mission optimization of energy-constrained drones.	
    
% There are two aspects of literature about drones:  1) low-level transient control of flight operations, for example, controlling propellers and balance using PID controllers \cite{siegwart2004pid, bregu2016dronecontrol}, and 2) high-level planning and management of drone missions, for example, obstacle avoidance,  localization and mapping  and path planning \cite{SNSbook}. However, the high-level studies typically focus on a single short-distance flight paths. Long-distance flight mission planning involving multiple trips and recharging optimization has been considered to a lesser extent, to the best of our knowledge. 

% To allow fully autonomous recharging of drones,  we also developed a robotic charging station prototype that can recharge drones autonomously by our drone management system. 

Although power consumption modeling and estimation of electric vehicles has been extensively studied, the research focus revolved primarily around ground vehicles~\cite{ganti2010greengps,stefan2013modularecev,alesaandn2002statmdlfc,cmtseng2015pardte,tc2016dte, eugene2013rtbattery}. Deviating from the latter, UAVs exhibit certain unique characteristics that entail new challenges (e.g., the impact of wind is more substantial for drone flight). As surveyed in~\cite{zhang2021energy}, the methods developed for drones can be broadly categorized into two types: model-based white-box and black-box. The former approach hinges on explicit theoretical (in a sense ``microscopic'') behavior model of a drone that comprehensively characterizes the motor performance, aerodynamic environment, and battery systems. However, producing a reliable white-box model often requires a large amount of data for calibration as well as specific particulars of the drone. For example, the aerodynamic parameters such as propeller and motor efficiencies, drag coefficients could be cumbersome to compute accurately without resorting to sophisticated experimental setups like wind tunnel. In contrast, a model-agnostic black-box method that relies on generic statistical techniques can estimate battery endurance of a drone with only a small set of measurable variables and parameters. In the sections to follow, we design a simple yet accurate power consumption regression model for multi-copter UAVs which is then leveraged for joint flight mission planning and recharging optimization.

\section{Problem Statement}\label{secprob}

This section formalizes the problem of joint flight mission planning and recharging optimization for battery-operated autonomous drones. We propose a multi-criteria objective function that seeks to complete a flight tour mission for a set of target sites in the shortest time while minimizing duration of intermediate recharges. 
%The underlying model involves multiple charging facilities and naturally captures diverse applications in logistic and monitoring systems. 

As illustrated in Fig.~\ref{fig:DFP}, consider a set of sites of interest, denoted by $\mathcal{S}$, that a drone needs to visit (e.g., sites for measurements or patrolling or drop-off locations of parcels), and a set $\mathcal{C}$ of charging station locations where a drone can refuel its battery. Let $v_0$ correspond to the base location of a drone and $V\triangleq\mathcal{S}\cup\mathcal{C}\cup\{v_0\}$.

\begin{figure}[t]
	\centering
	\includegraphics[width=0.73\columnwidth]{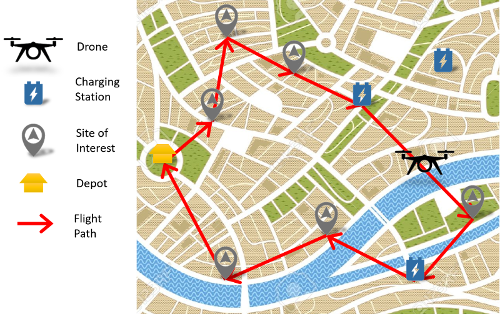} 
	\caption{A drone flight mission plan with charging stations.} \label{fig:DFP}
\end{figure}

 %The model can be generalized to consider non-straight paths.
%Note that the flight path between a pair of locations may be affected by a variety of factors, such as weather conditions and no-fly zone restriction.
Given a pair of locations $u, v \in V$, we denote the designated direct flight path from $u$ to $v$ by $\ell(u,v)$, and the corresponding flight time by $\tau(u,v)$. As is customary, we assume that, while in route, the drone first ascends vertically to a desired altitude, then travels in a straight path, and finally descends to the destination vertically. Let ${\sf E}\big(\ell(u,v), \tau(u,v)\big)$ be the energy consumption required for the drone to fly along $\ell(u,v)$ in a time span of $\tau(u,v)$. Here, ${\sf E}(\cdot,\cdot)$ is an increasing function that maps the combination of flight path $\ell(u,v)$ and flight time $\tau(u,v)$ to the required amount of energy. As exemplified in Section~\ref{sec:battery}, ${\sf E}(\cdot,\cdot)$ can be estimated through a first-order regression model.

We represent the \textit{charging strategy} by a function $b(\cdot):\mathcal{C}\mapsto{\Bbb R}$ that maps a charging station to an amount of energy to be recharged.  When recharging the battery at $u\in\mathcal{C}$, let the incurred charging time be $\tau_{\rm c}(b(u))$. Define by $\eta_{\rm c}\le1$ and $\eta_{\rm d}\ge1$ the charging and discharging efficiency coefficients, respectively. Once at a charging station $u$, the drone recharges its battery by an amount of energy denoted by $\eta_{\rm c} b(u)$. Whereas when flying between two sites $u,v \in V$, it drains $\eta_{\rm d}{\sf E}\big(\ell(u,v), \tau(u,v)\big)$ amount of energy from the battery.  

A {\em flight mission plan} is denoted by $\mathcal{F}$, which is a tour starting and terminating at $v_0$ that consists of a sequence of locations in $\mathcal{S} \cup \mathcal{C} \cup \{ v_0 \}$. Set $\mathcal{F}_k$ to be the $k$-th location in $\mathcal{F}$, then $\mathcal{F}_1 = \mathcal{F}_{|\mathcal{F}|} = v_0$, and let $x_k$ be the state-of-charge (SoC) when reaching $\mathcal{F}_k$.
We require the SoC to stay within a feasible range $[\underline{B},\overline{B}]$. The lower bound of SoC, $\underline{B}$, ensures sufficient residual energy for the drone to return to the base in case of emergency. The initial SoC is conventionally set to $x_0 = \overline{B}$.

With the above notations, the drone flight mission planning with recharging problem ({\sf DFP}) is formulated as
{\small\begin{align}
({\sf DFP})\quad &  \min_{\mathcal{F}, b(\cdot),x} \ \sum_{k=1}^{|\mathcal{F}|-1} \tau(\mathcal{F}_k,\mathcal{F}_{k+1}) + \sum_{k=1:\mathcal{F}_k \in \mathcal{C}}^{|\mathcal{F}|} \tau_{\rm c}(b(\mathcal{F}_{k})) \nonumber\\
\text{s.t.} \quad&  \mathcal{F}_1 = \mathcal{F}_{|\mathcal{F}|} = v_0\\
& \mathcal{S} \subseteq \mathcal{F} \subseteq \mathcal{S} \cup \mathcal{C} \cup \{ v_0 \}\\
& x_k = \left\{
\begin{array}{@{}l@{}}
x_{k-1} - \Psi_{k,k+1},  \mbox{\ if\ } \mathcal{F}_k \in \mathcal{S} \\
x_{k-1} + \eta_{\rm c} b(\mathcal{F}_{k+1}) -\Psi_{k,k+1},  \mbox{\ if\ } \mathcal{F}_k \in \mathcal{C}\\
\end{array}
\right. \label{eq:x}\\
& \underline{B} \le x_k \le \overline{B}, \ x_0 = \overline{B}\,,\
\end{align}}where $\Psi_{k,k+1} \triangleq \eta_{\rm d}{\sf E}\Big(\ell(\mathcal{F}_k,\mathcal{F}_{k+1}), \tau(\mathcal{F}_k,\mathcal{F}_{k+1})\Big)$.

DFP aims to find a flight mission plan $\mathcal{F}$ together with a charging strategy $b(\cdot)$ that minimizes the total trip time, consisting of the flight time plus the recharging time, while maintaining the SoC within permissible limits. 
The difficulty of {\sf DFP} is to balance the flight decisions and charging decisions. On one hand, a flight mission plan needs to consider the requirement of completing the tour goals in minimal total trip time. On the other hand, it needs to be able to reach a charging station in case of insufficient SoC.

For further practical adaptations, the formulation of {\sf DFP} can be extended to incorporate a variety of pragmatic mission planning factors, such as restrictions of no-fly zones and altitude as well as wind speed forecast information. Users can additionally weigh in with application-specific preferences, such as mission completion deadline or maximum payload weight. In Section~\ref{exte}, we highlight several immediate extensions for future work and sketch their solution methodologies.

%Furthermore, {\sf DFP}  assumes constant drone payload throughout the flight mission. A more complicated problem with varying payload (e.g., due to unloading at a site) will be a subject of future study.

\section{Power Consumption Model}\label{sec:battery}

In order to accurately optimize the flight missions of drones, this section develops a practical battery endurance estimator for UAVs. First, we perform extensive experiments considering various flight scenarios on multiple drones to generate training data and examine energy consumption covariates. We then develop a nine-term regression model and train it for each drone. Lastly, we appraise the model fidelity through field experiments.

%\textcolor{red}{In order to accurately optimize the flight missions of drones, we first conducted extensive experimentations to determine the battery performance of drones, considering various flight scenarios. In particular, we evaluate the power consumption of three commercial drone models, 3DR Solo \cite{solo}, DJI Matrice 100 \cite{dji} and  and DJI Matrice 600 Pro \cite{dji600}, which appear in Fig.~\ref{fig:drone-solo} (see Table~\ref{tab:drone-solo} in the Appendix for their specifications). The drones support developer kits, which allowed us to extract the sensor readings and program the flight paths. After gathering sufficient measurement data, we then apply regression models to capture the power consumption behavior of these drones.}

%After gathering sufficient measurement data, we then apply regression models to capture the power consumption behavior of these drones.

\subsection{Experimental Setup and Scope}

We evaluate the power consumption of three commercial drone models, namely 3DR Solo\footnote{\url{https://3drobotics.com/support}}, DJI Matrice 100, and DJI Matrice 600 Pro\footnote{\url{https://www.dji.com/products/enterprise?site=brandsite&from=nav#drones}}, which appear in Fig.~\ref{fig:drone-solo} (see Table~\ref{tab:drone-solo} in the supplementary materials for their specifications). The drones support developer kits, which allowed us to extract the sensor readings and program the flight paths. The onboard barometer and GPS sensors served for measuring the 3-dimensional movements of a drone. The (ground) speed and position of a drone were obtained through GPS and IMU modules, while the altitude was obtained through barometer and GPS. %which also enable automatic navigation. The altitude of a drone can be tracked by barometer and GPS modules.

\begin{figure}[!h]
	\includegraphics[width=.3\linewidth, height=20mm]{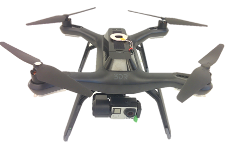}  
	\includegraphics[width=.32\linewidth, height=20mm]{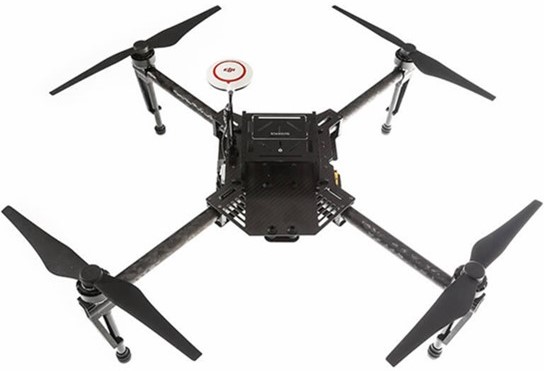}  
	\includegraphics[width=.32\linewidth, height=20mm]{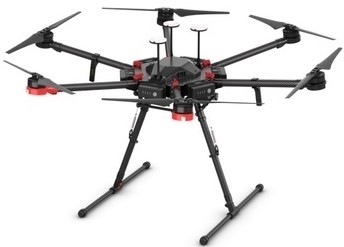}   
	\caption{Left: 3DR Solo. Middle: DJI Matrice 100. Right: DJI Matrice 600 Pro.}  \label{fig:drone-solo}
\end{figure}

%To understand the factors that determine the power consumption of a drone, we carried out the following experiments for obtaining empirical data in the rural areas, where the drone can fly in a straight path without obstacles:

To construct a detailed and practical battery endurance model, we examine the following four compound factors:

\begin{enumerate}
    
    \item {\bf Motion -} We further categorize drone motion into three types: hovering, horizontal and vertical. 
    
    \item {\bf Weight -} The total weight of a drone including the carried payload.
    
    \item {\bf Wind -} Wind may benefit the power consumption in some cases and incur resistance in other cases. To distinguish, both ground speed and direction are considered.

    \item {\bf Altitude -} Flight altitude correlates with the fluid density of air, which in turn may affect rotor thrust.

\end{enumerate}

%Fig.~\ref{fig:sensordata} depicts the recorded data traces of three experiments of a test drone. We discuss several observations as follows:

%\textcolor{blue}{To assess the effect of the above factors, we performed the following experiments (illustrated in Fig.~\ref{fig:sensordata}):}

\subsection{Experimentation Results and Analysis}\label{subsec-power_exp}
\noindent\textbf{\textit{Analyzing the Impact of Motion:}} To this end, four experiments were conducted, with the first three on 3DR Solo and the last on DJI Matrice 600, which results are detailed below. Figs.~\ref{fig:sensordata} and~\ref{fig:m600_seonsor} depict the recorded data traces.

\noindent{\scshape{Experiment 1}:}
 To assess the baseline power consumption, the test drone was hovered in the air without any movement. Note that drones may slightly drift around the takeoff location due to deviation error of GPS modules, hence the speed data smaller than 0.5 m/s is filtered out. From the recorded data, we observe that the drone can maintain a sufficiently steady flying altitude with steady power consumption.

\noindent{\scshape Experiment 2:}
 The test drone ascended and descended repeatedly, producing time-series data that allowed computing its vertical acceleration and speed. We observe larger power fluctuations due to vertical movements. Power consumption increases slightly when the drone ascends steadily.

\noindent {\scshape Experiment 3:}
 The test drone moved horizontally without altering its altitude in this experiment. The GPS data comprises of speed and course angle of the drone. We also gathered average wind speed and direction using a wind speed meter. We observe smaller power fluctuations due to horizontal movements as compared to vertical ones.% We also measure idle power consumption of the drone between the two experiments.
 
 \noindent {\scshape Experiment 4:} To test the effect of speed on a larger drone, we programmed DJI Matrice 600 to move horizontally at a fixed altitude under three different speeds. As revealed by Fig.~\ref{fig:m600_seonsor}, no notable difference in power consumption was observed. 

\begin{figure}[htb!]
    \includegraphics[width=\linewidth]{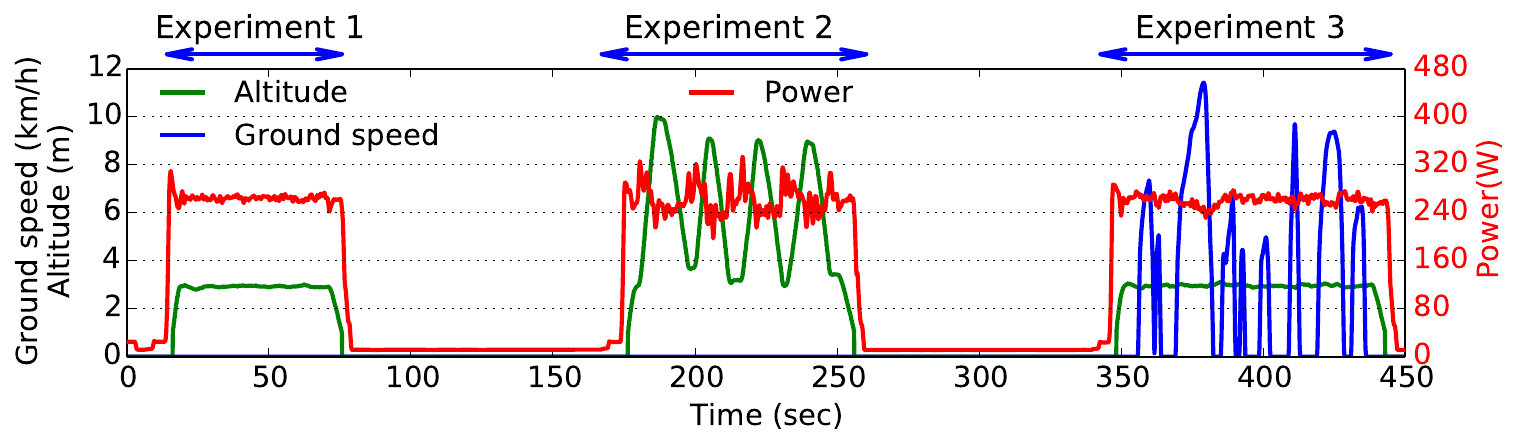}
	\caption{Power consumption of 3DR Solo for different motions.} %Motion and battery power consumption of 3DR Solo.  
	\label{fig:sensordata}
\end{figure}

 %We carried out several experiments with different weights of payloads on the test drone 3DR Solo to obtain empirical data. Three different weights were tested on the drone. The drone was set to hover in the air without any movement to obtain the corresponding baseline power consumption.

\noindent\textbf{\itshape Analyzing the Weight-induced Impact:} Here, two experiments were performed on DJI Matrice 600 and 3DR Solo with varying payload weights as illustrated in Figs.~\ref{fig:weightexp} and~\ref{fig:m600_seonsor}.

\noindent {\scshape Experiment 5:} To obtain the baseline power consumption, we set 3DR Solo to hover in the air without any movement while carrying three different weights. As inferred from Fig.~\ref{fig:weightexp}, the observed power consumption increases almost linearly with the payload weight. Note that the maximum loading capacity is 500g for 3DR Solo.

\noindent {\scshape Experiment 6:} To examine the joint impact of motion and weights, DJI Matrice 600 was dispatched to fly horizontally under three different payload and speed settings. In line with the preceding analysis, nearly linear power consumption growth was recorded with respect to payload weight during stationary motion, whereas for horizontal acceleration we observe rather comparable spikes in power consumption.

\begin{figure}[!ht]
	\includegraphics[width=\linewidth]{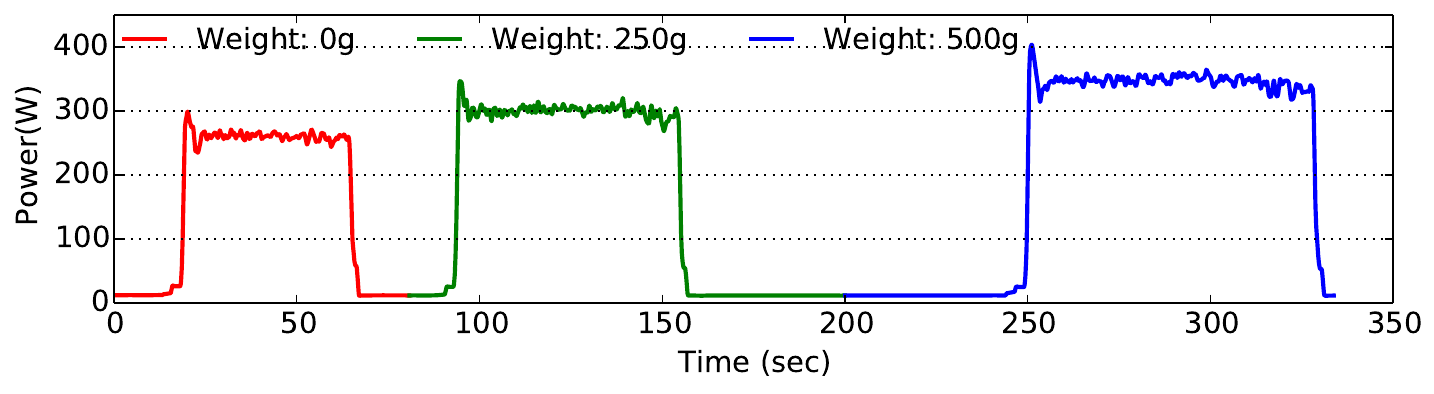}%\vspace{-2mm}
	\caption{Power consumption of 3DR Solo for different payloads.}  
	\label{fig:weightexp}
\end{figure}
\begin{figure}[ht]
    \centering
    \includegraphics[width=0.97\linewidth]{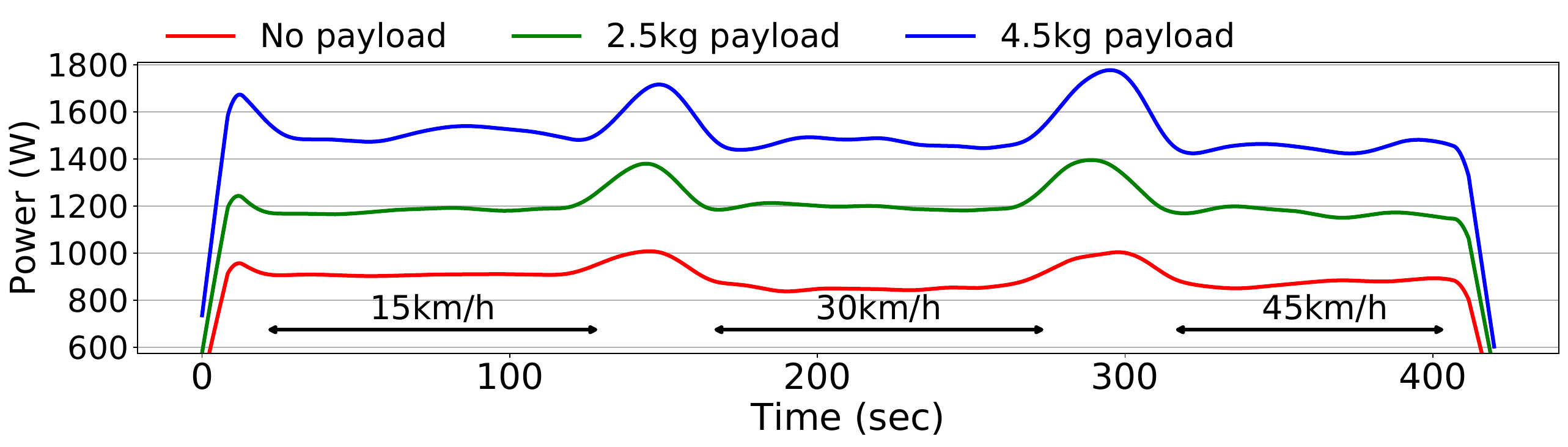}
    \caption{Power consumption of DJI Matrice 600 under different payloads and horizontal speeds.}
    \label{fig:m600_seonsor}
\end{figure}

\noindent\textbf{\itshape Exploring the Effect of Wind:} For exhaustive analysis, four test runs were performed under different wind speeds and directions.  %{\em headwind} by flying against the direction of the wind, and {\em tailwind} by flying along the direction of the wind. 
The experiment was conducted at the same location but on days with different wind conditions. The wind direction and average speed were measured with a wind meter.
%The drone was set to fly into a headwind or tailwind at maximum ground speed (18 km/h).
%The experiments were carried out at the same location but on different days with different wind conditions. The wind directions and average speeds were measured using a wind speed meter for each experiment. Once the wind direction was determined, the drone was set to fly into a headwind or tailwind at maximum ground speed (18 km/h).

\noindent{\scshape Experiment 7:} Fig.~\ref{fig:windtexp} shows the battery power consumption of 3DR Solo under different wind speeds when flying into a {\em headwind} and {\em tailwind} at maximum ground speed (18 km/h). We observe decreased power consumption when flying into a {\em headwind}, which is due to the increasing thrust by {\em translational lift} when the drone transitions from hovering to forward flight. When flying into a {\em headwind}, translational lift increases as the relative airflow over the propellers increases, resulting in less power consumption to hover the drone \cite{FAABook2012}. However, when the wind speed exceeds a certain limit, the aerodynamic drag may outweigh the benefit of translational lift. The drone speed is relatively slow in our setting, even at its peak. Hence, flying into a {\em headwind} is likely more energy-efficient for 3DR Solo.

\begin{figure}[!h]
	\includegraphics[width=\linewidth]{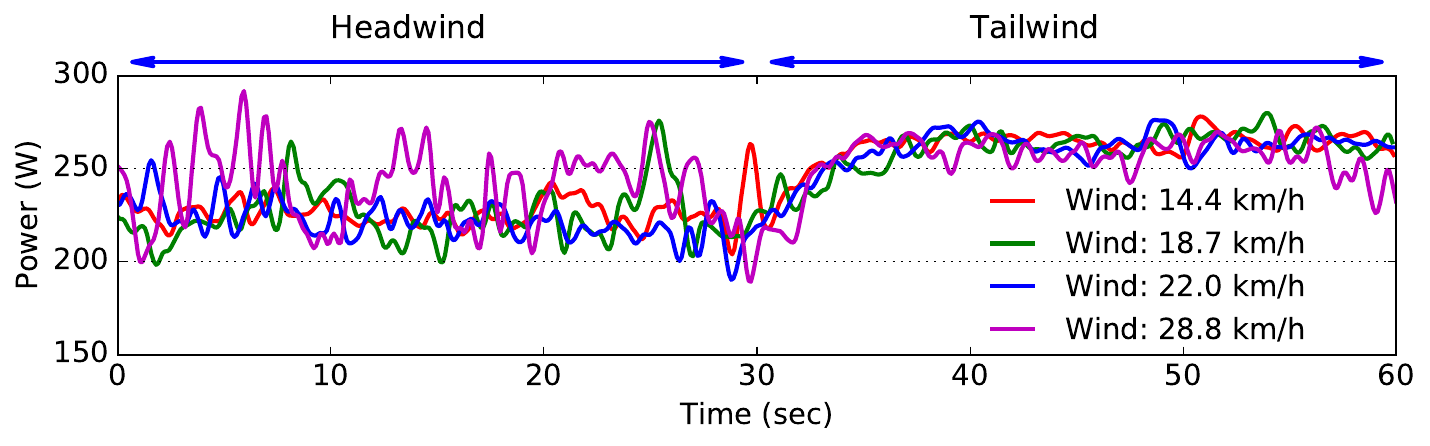}%\vspace{-2mm}
	\caption{Power consumption of 3DR Solo under different wind conditions.}  
	\label{fig:windtexp}
\end{figure}

\noindent\textbf{\itshape Assessing the Influence of Altitude:} To quantify the impact of air density on battery performance, we performed multiple tests on DJI Matrice 600 as elaborated below.  

\noindent {\scshape Experiment 8:} The test drone was hovered for around 2 minutes with zero ground speed at a height of 50 and 110 meters (FAA maximum permitted altitude is 120 meters ). As seen from Fig.~\ref{fig:altitudetexp}, the observed variation in power consumption is merely $1\%$. While at higher altitudes the effect might be more substantial, given the adopted FAA regulations, the incurred impact is assumed constant. 

\begin{figure}[!h]
    \center
    {\includegraphics[width=\linewidth]{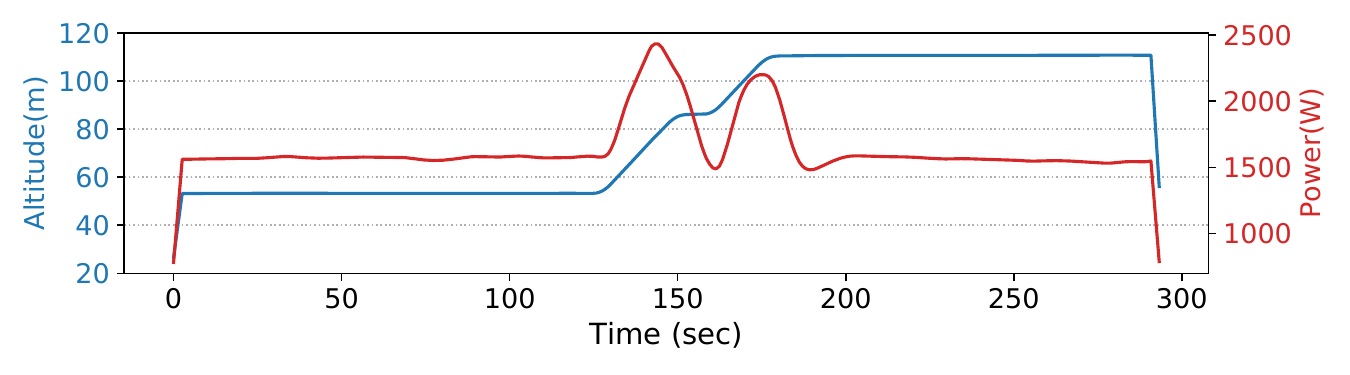}}  %\vspace{-3mm}
	\caption{Power consumption of DJI Matrice 600 while hovering at low and high altitudes.}  
	\label{fig:altitudetexp}
\end{figure}

\subsection{Regression Model of Power Consumption for Drones}

%Since typical drones are aerial electric vehicles, we can apply the methodology from the literature of general electric vehicles to model the power consumption of a drone. There are two main types of power consumption models of a drone: 

%\end{itemize}

This section introduces a general multivariate black-box model of power consumption for multi-copter drones. In essence, the battery consumption of an electric vehicle is determined by the total power needed to overcome the physical forces opposing its motion, which include the force for acceleration and aerodynamic drag force\cite{stefan2013modularecev,alesaandn2002statmdlfc, eugene2013rtbattery}.

Let $\hat{P}$ denote the estimated battery power consumption of a drone. Hinging on the insights gained in Section~\ref{subsec-power_exp} as well as prior models for electric vehicles, we express $\hat{P}$ as a linear combination of three composite forces parameterized by mass, acceleration, velocity and wind attributes. Mathematically, $\hat{P}$ takes the following form:
\begin{equation}\scalemath{0.75}{
\hat{P} =  
\begin{bmatrix}
\beta_1\\
\beta_2\\
\beta_3
\end{bmatrix}^{T} 
\begin{bmatrix}
\|\vec{v}_{xy}\|\\
\|\vec{a}_{xy}\|\\
\|\vec{v}_{xy}\|\|\vec{a}_{xy}\|
\end{bmatrix} 
 \mbox{+}
\begin{bmatrix}
\beta_4\\
\beta_5\\
\beta_6
\end{bmatrix}^{T}
\begin{bmatrix}
\|\vec{v}_{z}\|\\
\|\vec{a}_{z}\|\\
\|\vec{v}_{z}\|\|\vec{a}_{z}\|
\end{bmatrix}
 \mbox{+}
\begin{bmatrix}
\beta_7\\
\beta_8\\
\beta_9
\end{bmatrix}^{T}
\begin{bmatrix}
 m \\
\vec{v}_{xy} \cdot \vec{w}_{xy} \\
1
\end{bmatrix}
\label{eqn:totalmodel}}
\end{equation}
\noindent where $\|\cdot\|$ denotes the magnitude of a vector, ${\beta}_{1},..., {\beta}_{9}$ are the regression coefficients to be calculated and
\begin{itemize}
%\item $\vec{v}_{xyz}$ and $\vec{a}_{xyz}$ are the speed and acceleration vectors in 3-dimensional space.
\item $\vec{v}_{xy}$ and $\vec{a}_{xy}$ are the speed and acceleration vectors describing the horizontal movement of the drone,
\item $\vec{v}_{z}$ and $\vec{a}_{z}$ are the speed and acceleration vectors characterizing the vertical movement of the drone,
\item $m$ is the payload weight,
\item $\vec{w}_{xy}$ is the vector of wind dynamics in the horizontal surface.
\end{itemize}
%The coefficients ${\beta}_{1},..., {\beta}_{9}$ can be estimated by the standard regression method, if sufficient measurement data is collected.

Assuming uniform conditions in \eqref{eqn:totalmodel}, the total power consumption of a drone in time interval $\Delta t$ would then amount to $\hat{P} \cdot \Delta t$. We note that while the above model does not capture all detailed factors, it can provide relatively accurate estimations with low computational complexity, as verified in Secs.~\ref{subsec-reg_model} and~\ref{exp}.

\subsection{Model Evaluation}\label{subsec-reg_model}

Based on the data collected from the experiments in Section~\ref{subsec-power_exp}, power consumption predictive models were trained for each of the three test drones. The resulting regression coefficients for 3DR Solo, DJI Matrice 100 and Matrice 600 are as follows: $\beta_{\rm solo} =$  [-1.526, 3.934, 0.968, 18.125, 96.613, -1.085, 0.220, 1.332, 433.9], $\beta_{\rm M100} =$  [-2.595, 0.116, 0.824, 18.321, 31.745, 13.282, 0.197, 1.43, 251.7], and $\beta_{\rm M600} =$ [-1.777, 4.408, -0.038, 93.94, 1.362, -0.111, 140.46, 2.249, 0.0].

We assess the accuracy of produced estimations through two set of experiments. In the first, 3DR Solo and DJI Matrice 100 were programmed to perform vertical movements, then fly into a headwind and tailwind carrying different payload amounts while maintaining their altitude during the horizontal flight. As evidenced by Fig.~\ref{fig:deva1-2}, the predicted and the ground truth power consumption records match closely within 0.4\% error. The second experiment, conducted with DJI Matrice 600, employed a more complex flight scenario with multiple target locations and a recharging station as laid out in Section~\ref{exp}. According to the results, the estimated energy consumption was within 5\% deviation from the actual measurements.  

 \begin{figure}[h]
\center
\captionsetup[subfigure]{labelformat=empty}
\subfloat[]{  \includegraphics[trim=0 5.2mm 0 0, clip,width=.97\linewidth]{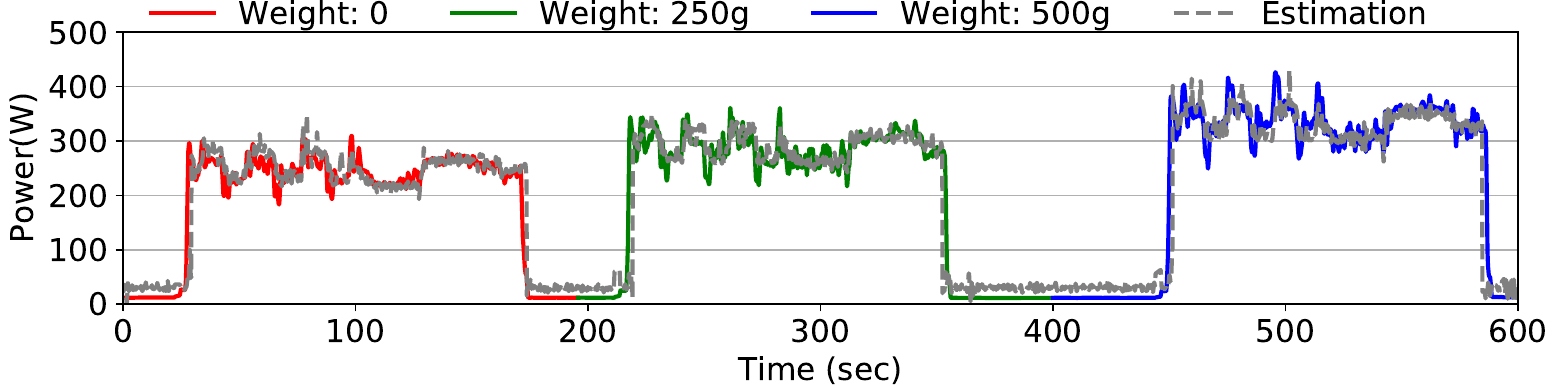}}\vspace*{-5mm}

\subfloat[]{  \includegraphics[width=.97\linewidth]{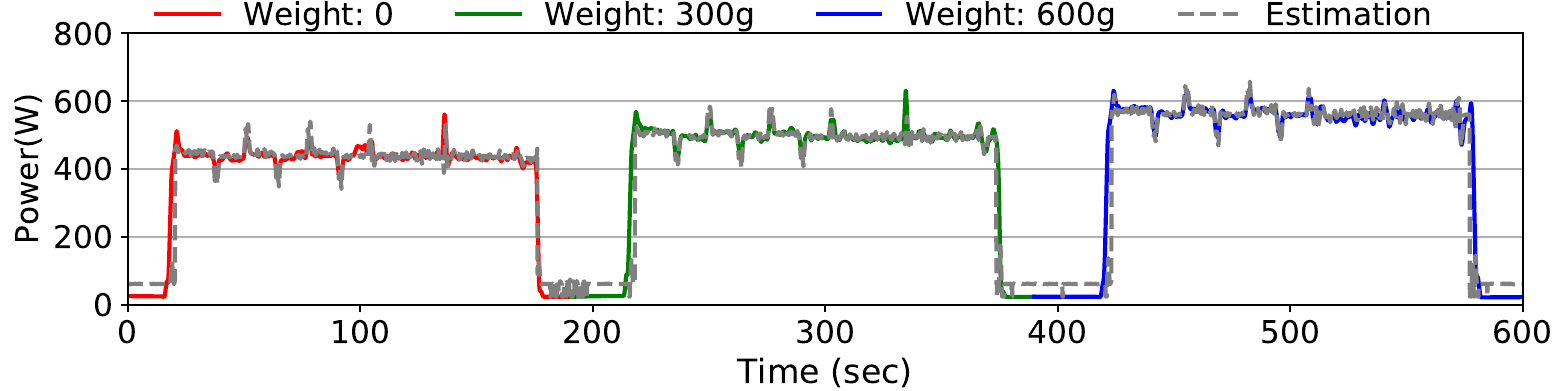}}
\caption{Measured and estimated power consumption of 3DR Solo (top) and DJI Matrice 100 (bottom). }
\label{fig:deva1-2}
\end{figure}

\section{Solution Methodology}

\subsection{Case with Uniform Speed and Steady Wind}\label{sec:ndl}

To provide efficient algorithms for {\sf DFP}, we first consider a basic setting under some realistic assumptions. Suppose that the horizontal speed of the drone is a uniform constant under steady wind condition, which will be relaxed in Section~\ref{sec:ext}. Then, the flight time $\tau(u,v)$ between two sites $u,v\in V$ is proportional to the length of flight path $\ell(u,v)$, denoted by $d(u,v)$.
The energy consumption model established in Section~\ref{sec:battery} implies that the function ${\sf E}\big(\ell(u,v), \tau(u,v)\big)$ is linear in the distance $d(u,v)$, and the charging time $\tau_{\rm c}(b(u))$ is linear in the amount of recharged energy $b(u)$. Thus, we assume the following linear objective functions:
\begin{align}
& \tau(u,v)=c_{a} d(u,v), \tau_{\rm c}(b(u))=c_{b}b(u),  \label{eqn:linearcost} \\
& {\sf E}\big(\ell(u,v),\tau(u,v)\big)=c_{f}(u,v) \cdot d(u,v), 
\end{align}
for some constants $c_{a},c_{b},c_{f}(u,v)>0$. Note that we allow $c_{f}(u,v)$ to be edge-dependent. This can model non-uniform environment\footnote{{Note that we consider the asymmetry in the energy consumption, due to wind direction, by assuming two different constants $c_{f}(u,v)$ and $c_{f}(v,u)$. %However, as we assume both constants are bounded between $\underline{c}_{f}$ and $\overline{c}_{f}$, and these two constants are the ones that appear in the analysis of our algorithm, the asymmetry does not play a direct role in the algorithm.
}} for each $\ell(u, v)$, for instance, a path experiencing stronger wind is expected to have a larger constant $c_{f}(u,v)$.  
Denote the minimum and maximum values of $c_f$ by  $\underline{c}_{f}\triangleq \min_{(u,v)}c_{f}(u,v)$ and $\overline{c}_{f}\triangleq \max_{(u,v)}c_{f}(u,v)$, respectively.

In this paper, we concentrate mostly on long-distance trips (e.g., $\geq$ 4 km), for which the vertical landing and take-off operations usually constitute a small part of the whole flight (e.g., $<10$m vertically), hence account for only a marginal percentage of the total energy expenditure (e.g., $<1\%$). For exposition clarity, the energy consumption of these operations is assumed to be implicitly captured by $c_{f}(u,v) \cdot d(u,v)$, though the results can be easily extended to consider them explicitly.

For convenience of notation, for a flight mission plan $(\mathcal{F},b(\cdot))$, we write
$
\tau(\mathcal{F})\triangleq\sum_{k=1}^{|\mathcal{F}|-1} \tau(\mathcal{F}_k,\mathcal{F}_{k+1})$, $\tau_{\rm c}$$(b(\mathcal{F}))\triangleq\sum_{k=1:\mathcal{F}_k \in \mathcal{C}}^{|\mathcal{F}|} \tau_{\rm c}(b(\mathcal{F}_{k}))$ and define $d(\mathcal{F})\triangleq\sum_{k=1}^{|\mathcal{F}|-1} d(\mathcal{F}_k,\mathcal{F}_{k+1})$.

Under the aforementioned assumptions, the total charging time $\tau_{\rm c}(b(\mathcal{F}))$, in an optimal flight mission plan $\mathcal{F}$ with the corresponding charging strategy $b(\mathcal{F})$, is related to the total flight time $\tau(\mathcal{F})$ by the following lemma.

\begin{lemma}\label{lem:tot}
	In an optimal flight mission plan  $(\mathcal{F},b(\cdot))$, we have
	%\footnote{For brevity, we use the notation: $[\underline{c},\overline{c}] d(\mathcal{F})+c'$ to mean the interval $[\underline{c}\cdot d(\mathcal{F})+c',\ \overline{c}\cdot d(\mathcal{F})+c']$} 
	$$
	\underline{c}\cdot d(\mathcal{F})+c' \le 
	\tau(\mathcal{F}) + \tau_{\rm c}(b(\mathcal{F}))
	\le \overline{c}\cdot d(\mathcal{F})+c'
	$$
	where either 
	\begin{enumerate}
		
		\item[1)]
		$\underline{c}=\overline{c}=c_{a}$ and ${c}'=0$, or 
		
		\item[2)] $\underline{c}=c_{a}+\underline{c}_{f}c_{b}\frac{\eta_{\rm d}}{\eta_{\rm c}}$, $\overline{c}=c_{a}+\overline{c}_{f}c_{b}\frac{\eta_{\rm d}}{\eta_{\rm c}}$, and $c'=\frac{c_{b}}{\eta_{\rm c}}(\underline{B}-x_0)$. 
		
	\end{enumerate}
\end{lemma}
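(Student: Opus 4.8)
The plan is to derive an exact expression for the total trip time $\tau(\mathcal{F}) + \tau_{\rm c}(b(\mathcal{F}))$ of any feasible plan, and then invoke optimality only to pin down the final state-of-charge. First I would telescope the SoC recursion~\eqref{eq:x} along the whole tour: summing the per-edge increments $x_k - x_{k-1}$ from start to finish cancels all intermediate terms and leaves the global energy balance $x_{\rm f} - x_0 = \eta_{\rm c}\,b(\mathcal{F}) - \eta_{\rm d}\sum_{(u,v)}{\sf E}(\ell(u,v),\tau(u,v))$, where $x_{\rm f}$ is the SoC upon returning to $v_0$, the sum runs over the edges of $\mathcal{F}$, and $b(\mathcal{F})=\sum_{u\in\mathcal{C}\cap\mathcal{F}}b(u)$ is the total recharged energy. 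Here the recharge contributions $\eta_{\rm c}b(u)$ add up over the visited stations and the flight consumptions $\eta_{\rm d}{\sf E}(\cdot,\cdot)$ add up over all edges.

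Next I would solve this identity for the total charging time. Using the linear forms in~\eqref{eqn:linearcost}, namely $\tau_{\rm c}(b(\mathcal{F}))=c_{b}\,b(\mathcal{F})$ and ${\sf E}(\ell(u,v),\tau(u,v))=c_{f}(u,v)\,d(u,v)$, the balance rearranges to
\[
\tau_{\rm c}(b(\mathcal{F}))=\frac{c_{b}}{\eta_{\rm c}}\Big(x_{\rm f}-x_0+\eta_{\rm d}\sum_{(u,v)}c_{f}(u,v)\,d(u,v)\Big).
\]
Adding $\tau(\mathcal{F})=c_{a}\,d(\mathcal{F})$ then yields an exact formula for the objective in terms of $d(\mathcal{F})$, the weighted sum $\sum c_{f}d$, and the final charge $x_{\rm f}$.

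I would then split into two cases according to whether the optimal plan recharges at all. If $b(\mathcal{F})=0$, no charging station is used, the charging time vanishes, and $\tau(\mathcal{F})+\tau_{\rm c}(b(\mathcal{F}))=c_{a}d(\mathcal{F})$ exactly, which is case~1). If $b(\mathcal{F})>0$, the key claim is that optimality forces $x_{\rm f}=\underline{B}$, and this exchange argument is where I expect the main difficulty. The subtlety is that blindly lowering a recharge could violate the lower bound at an intermediate stop; I would circumvent this by focusing on the \emph{last} charging station with positive recharge. After that station the SoC is strictly non-increasing, since every remaining edge only discharges the battery, so $x_{\rm f}$ is the minimum SoC over the terminal segment. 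Hence if $x_{\rm f}>\underline{B}$, reducing that last recharge by a sufficiently small $\varepsilon>0$ keeps the entire terminal segment inside $[\underline{B},\overline{B}]$ and leaves all earlier SoC values untouched, while strictly decreasing $b(\mathcal{F})$ and therefore the charging time — contradicting optimality.

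Finally I would substitute $x_{\rm f}=\underline{B}$ into the objective formula (leaving $x_0=\overline{B}$ symbolic, which reproduces $c'=\tfrac{c_{b}}{\eta_{\rm c}}(\underline{B}-x_0)$) and bound the only edge-dependent term by the extremal coefficients: since $\underline{c}_{f}\le c_{f}(u,v)\le\overline{c}_{f}$ for every edge, we have $\underline{c}_{f}\,d(\mathcal{F})\le\sum_{(u,v)}c_{f}(u,v)\,d(u,v)\le\overline{c}_{f}\,d(\mathcal{F})$. This sandwiches $\tau(\mathcal{F})+\tau_{\rm c}(b(\mathcal{F}))$ between $(c_{a}+\underline{c}_{f}c_{b}\tfrac{\eta_{\rm d}}{\eta_{\rm c}})\,d(\mathcal{F})+c'$ and the analogous upper bound with $\overline{c}_{f}$, giving case~2). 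The telescoping and the final sandwiching are routine; the delicate point is the exchange argument pinning $x_{\rm f}=\underline{B}$, which must be phrased through the last positive recharge so that feasibility upstream is never disturbed.
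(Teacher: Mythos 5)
Your proposal is correct and follows essentially the same route as the paper: the telescoped energy balance is the paper's constraint $I(r)\ge\underline{B}$ in disguise, your exchange argument on the last positive recharge is a cleaner phrasing of the paper's backward elimination of recharge variables (either all $b(\mathcal{F}_{i_j})=0$ or the final constraint is tight, i.e.\ $x_{\rm f}=\underline{B}$), and the concluding sandwich of $\sum c_f d$ between $\underline{c}_f d(\mathcal{F})$ and $\overline{c}_f d(\mathcal{F})$ is identical. No gaps; the dichotomy and the resulting constants match the paper exactly.
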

\begin{proof}
	See the supplementary materials.  
\end{proof}

In other terms, Lemma~\ref{lem:tot} characterizes the total charging time with respect to  total fight time in the optimal flight plan by the respective upper and lower bounds in terms of the distances to be traveled.

\begin{lemma}\label{lem:red}
	Given any feasible flight mission plan $(\mathcal{F},b(\cdot))$, there is another feasible flight mission plan $(\mathcal{F},b'(\cdot))$ such that 
	$$
	\tau_{\rm c}(b'(\mathcal{F})) \le \frac{\underline{B}-x_0}{\eta_{\rm c}}+\frac{\overline{c}_{f}\eta_{\rm d}}{\eta_{\rm c}} \cdot d(\mathcal{F})\,.$$
	Such a plan $(\mathcal{F},b'(\cdot))$ can be found in $O(|V|)$ time.
\end{lemma}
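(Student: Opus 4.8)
The plan is to keep the tour $\mathcal{F}$ fixed and build the new strategy $b'$ by a single greedy ``just-in-time'' sweep, and then to control its total charge through an energy-conservation identity. First I would telescope the state recurrence~\eqref{eq:x} over $k=1,\dots,|\mathcal{F}|$. For \emph{any} charging strategy whose induced SoC sequence satisfies~\eqref{eq:x}, the intermediate values cancel and leave the global balance $\eta_{\rm c}\sum_{u\in\mathcal{C}\cap\mathcal{F}} b(u)-\eta_{\rm d}\sum_{e}c_{f}(e)\,d(e)=x_{|\mathcal{F}|}-x_0$, so that the total recharged energy equals $\frac{1}{\eta_{\rm c}}\big(x_{|\mathcal{F}|}-x_0\big)+\frac{\eta_{\rm d}}{\eta_{\rm c}}\sum_{e}c_{f}(e)\,d(e)$. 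This identity (the same bookkeeping underlying Lemma~\ref{lem:tot}) shows that, with the route fixed, the total charge---hence the charging time $c_{b}\sum_u b(u)$---is pinned down by the distance-weighted consumption together with the \emph{terminal} state $x_{|\mathcal{F}|}$, so that the only way to make it small is to drive $x_{|\mathcal{F}|}$ down to the lower limit $\underline{B}$.

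Next I would construct such a $b'$ explicitly. Let the charging stations occur along $\mathcal{F}$ in the order $i_1<\dots<i_m$, cutting the tour into consecutive legs, and let $C_j$ denote $\eta_{\rm d}$ times the energy consumed on the leg departing the $j$-th station (with $C_0$ for the opening leg out of $v_0$). Define $b'$ greedily: ride the initial charge $\overline{B}$ while it lasts, and at each station top the SoC up by the least amount needed to reach the next station, or return to $v_0$, with SoC exactly $\underline{B}$---that is, raise it to the level $\underline{B}+C_j$ only when the arriving SoC would otherwise fall short of that level. Computing the $C_j$ and the top-ups is a single left-to-right pass over $\mathcal{F}$, which gives the claimed $O(|V|)$ running time.

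The decisive step is feasibility, and here I would lean on the hypothesis that the \emph{given} plan $(\mathcal{F},b)$ is feasible. Because the original SoC stays in $[\underline{B},\overline{B}]$ across each leg, the per-leg consumption is bounded, forcing $C_j\le\overline{B}-\underline{B}$ for every $j$; consequently each target level $\underline{B}+C_j$ lies at or below $\overline{B}$, so the greedy top-ups never exceed $\overline{B}$, while they only ever raise the SoC to a value at least $\underline{B}$ and the SoC falls monotonically between stations, so it never dips below $\underline{B}$ either. Thus $b'$ is feasible; moreover, once the first strictly positive top-up has occurred, every later station is reached at exactly $\underline{B}$, so the drone returns to $v_0$ with $x_{|\mathcal{F}|}=\underline{B}$. (The sole exception is the degenerate regime in which the initial charge already suffices for the entire tour, where $b'\equiv 0$.)

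Finally, substituting $x_{|\mathcal{F}|}=\underline{B}$ into the conservation identity and bounding $c_{f}(e)\le\overline{c}_{f}$ yields
\[
\sum_{u} b'(u)=\frac{\underline{B}-x_0}{\eta_{\rm c}}+\frac{\eta_{\rm d}}{\eta_{\rm c}}\sum_{e}c_{f}(e)\,d(e)\le\frac{\underline{B}-x_0}{\eta_{\rm c}}+\frac{\overline{c}_{f}\,\eta_{\rm d}}{\eta_{\rm c}}\,d(\mathcal{F}),
\]
which is exactly the asserted inequality (carrying an extra factor $c_{b}$ if one phrases it in charging time rather than charged energy). I expect the feasibility verification---deriving $C_j\le\overline{B}-\underline{B}$ from the feasibility of the original $b$, and checking that the just-in-time top-ups keep the SoC inside $[\underline{B},\overline{B}]$ while terminating at $\underline{B}$---to be the main obstacle, rather than the closing algebra.
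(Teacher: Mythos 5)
Your proof is correct, and it rests on the same two pillars as the paper's argument: the telescoped energy--conservation identity, and the observation that with the tour fixed the total charge is minimized by driving the terminal SoC down to $\underline{B}$, after which the per-edge bound $c_{f}(u,v)\le\overline{c}_{f}$ yields the stated inequality. Where you genuinely differ is in the construction of $b'$. The paper's proof is a one-line pointer to the argument of Lemma~\ref{lem:tot} as implemented by procedure ${\tt Fix\mbox{-}charge}$: that procedure \emph{keeps} the given charges at the early stations and performs a backward sweep that zeroes out the tail and shrinks one last charge so that the terminal constraint becomes tight, so feasibility of the prefix is inherited verbatim from the feasibility of $(\mathcal{F},b(\cdot))$. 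You instead discard $b$ entirely and rebuild a forward, just-in-time strategy, which forces you to prove feasibility from scratch---and your key step there, extracting $C_j\le\overline{B}-\underline{B}$ from the feasibility of the original plan, is exactly what makes that work. Each route buys something. Yours is self-contained and keeps the post-charging SoC at $\underline{B}+C_j\le\overline{B}$ at every station, so it stays feasible even under the stricter, physically natural reading in which the SoC may not exceed $\overline{B}$ immediately after charging; the paper's ${\tt Fix\mbox{-}charge}$ front-loads all remaining charge at a single station (post-charge SoC $\underline{B}+\sum_{k\ge j^*}D_k$), which can exceed $\overline{B}$ under that reading and is admissible only because formulation~\eqref{eq:x} constrains the SoC at arrival points alone. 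The paper's route, in turn, is shorter, reusing both the tightness case analysis of Lemma~\ref{lem:tot} and the original plan's feasibility. Both constructions give the identical total charge $\frac{1}{\eta_{\rm c}}\big(\underline{B}-x_0+\sum_k D_k\big)$, both run in linear time, and both share the same degenerate caveat, which you correctly flag (when $b'\equiv 0$ the bound should be read as $\max\{0,\cdot\}$, since its right-hand side can be negative for short tours), as you also do for the $c_{b}$ normalization between charged energy and charging time.
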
	
\begin{proof}
	See the supplementary materials. (Algorithm~\ref{alg:fix-charge} below demonstrates how to construct  such a plan explicitly.)  
	%The proof follows from Lemma~\ref{lem:tot}, which uses Algorithm~${\tt Fix\mbox{-}charge}$ (as described in the following)  to implement the argument used in Lemma~\ref{lem:tot} to find a feasible fight mission plan.    	
\end{proof}	

Both Lemma~\ref{lem:tot} and Lemma~\ref{lem:red} allow us to focus on minimizing the distance $d(\mathcal{F})$ instead of total trip time. 
Hence, we simplify {\sf DFP} such that the resulting problem's optimal solution is later shown to be within a constant factor from an optimal solution of {\sf DFP}. The simplified formulation ({\sf SDFP}) is defined as
\begin{align}
({\sf SDFP}) \quad  & \min_{\mathcal{F},x} \sum_{k=1}^{|\mathcal{F}|-1} \widehat{d}\big(\mathcal{F}_k,\mathcal{F}_{k+1}\big) \\
\mbox{s.t.\ }\quad & \mathcal{F}_1 = \mathcal{F}_{|\mathcal{F}|} = v_0 \\
& \mathcal{S} \subseteq \mathcal{F} \subseteq \mathcal{S} \cup \mathcal{C} \cup \{ v_0 \}\\
& x_k = \left\{
\begin{array}{@{}l@{}}
x_{k-1} -\eta_{\rm d}\widehat{d}(\mathcal{F}_k,\mathcal{F}_{k+1}),  \mbox{\ if\ } \mathcal{F}_k \in \mathcal{S}\\
\overline{B}, \quad \mbox{\ if\ } \mathcal{F}_k \in \mathcal{C}\\
\end{array}
\right. \label{eq2:x}\\
& \underline{B} \le x_k \le \overline{B}, \ x_0 = \overline{B},
\end{align}
\noindent and its explicit Mixed-Integer Linear Programming (MILP) formulation is provided in the supplementary materials. In {\sf SDFP}, we consider a modified distance function $\widehat{d}(\cdot, \cdot)$, which is defined as follows. Recall that  $V\triangleq\mathcal{S}\cup\mathcal{C}\cup\{v_0\}$. Consider a weighted directed complete graph $G_0=(V,2\binom{V}{2})$, whose edge lengths are defined by $\{c_{f}(u,v) \cdot d(u,v)\}_{u,v}$. Then, for a pair of nodes $u,v \in G_0$, $\widehat{d}(u,v)$ quantifies the distance of the shortest feasible path from $u$ to $v$ assuming full charge at $u$ and possibility of passing through charging stations.
{\sf SDFP} specializes to the {\it Tour Gas Station problem} studied in \cite{KMM2011gsp}, which is NP-hard. 

Note that we assume in {\sf SDFP} that the SoC is brought to its maximum at each charging station. Once we obtain a tour under this assumption, it can be turned into a flight mission plan with the minimal charging requirements using Lemma~\ref{lem:red}.

For $u\in V$, let $d_u'\triangleq\min_{v\in\mathcal{C}}\widehat d(u,v)$ be the distance to the nearest charging station from $u$, and  ${\tt s}_u'\triangleq\text{argmin}_{v\in\mathcal{C}}\widehat d(u,v)$ be the corresponding nearest charging station from $u$. Also, let $d_u''\triangleq\min_{v\in\mathcal{C}}\widehat d(v, u)$ be the shortest distance starting from a charging station to $u$, and  ${\tt s}_u''\triangleq\text{argmin}_{v\in\mathcal{C}}\widehat d(v, u)$ be the corresponding charging station. Since we have asymmetric distances, $s'_u$ is not necessarily equal to $s''_u$.

Following \cite{KMM2011gsp}, we make a mild assumption that for every $u\in\mathcal{S}\setminus \{v_0\}$ there is $v\in\mathcal{C}$ such that $d(u,v)\le\alpha \frac{U}{2}$, where $\alpha\in[0,1)$ and $U\triangleq \frac{\overline{B}-\underline{B}}{\eta_{\rm d}}$. Intuitively, this assumption indicates that the distance between a pair of sites is always reachable by the available battery capacity. This assumption can be justified (for $\alpha=1$) as follows.
For a location $u\in\mathcal{S}\setminus \{v_0\}$, if every $v\in\mathcal{C}$ is at a distance greater than $\frac{U}{2}$, then it is infeasible to visit $u$ without incurring the battery level below $\underline{B}$ (as the SoC drops below $\overline{B}-\eta_{\rm d} U=\underline{B}$).   
Moreover, $\beta$ is defined as the maximum ratio between edges going between two gas stations, such as $d(u,v) \leq \beta \cdot d(v,u) \quad \forall~u,v\in D$.

In the following, we present an algorithm to {\sf SDFP} and then {\sf DFP}. The main algorithm is ${\tt Find\mbox{-}plan}\big[V,d\big]$, which is an extension of the Hungarian Algorithm~\cite{frieze1982worst} for finding a tour for ATSP. It converts the graph into a bipartite graph, iteratively solves the minimum assignment problem, and finally combines all sub-tours to get a single tour. The resulting tour is passed to the procedure ${\tt Fix\mbox{-}plan}$ for converting it to a feasible flight mission plan $\mathcal{F}$, which might use a non-optimal charging strategy $b(\cdot)$. Then, the resulting plan $(\mathcal{F},b(\cdot))$ is delegated to procedure ${\tt Fix\mbox{-}charge}$ for finding the minimal charging requirements with respect to the flight mission plan $\mathcal{F}$.  
Specifically, the three procedures in  ${\tt Find\mbox{-}plan}\big[V,d\big]$  are:

\begin{itemize}
	
	\item ${\tt Init\mbox{-}distances}\big[V,\widehat{d},u,v\big]$: This provides a lower bound for an optimal solution. Namely, it finds for every pair of locations $u,v\in V$, the minimum possible distance $\widetilde{d}(u,v)$, and the corresponding shortest path $\cP(u,v)$ to go from $u$ to $v$ without going out of the operational range of the battery. Note that if $\widehat{d}(u,v) \leq  U-d_u''-d_v'$ then the drone can always go from $u$ to $v$ directly\footnote{That is, starting with SoC$=\overline B$ at ${\tt s}_u$, then the drone reaches $u$ with SoC $\overline B-\eta_{\rm d}d_u''$, and then it flies directly from $u$ to $v$ causing the SoC to drop to $\overline B-\eta_{\rm d}(d_u''+\widehat{d}(u,v))=\underline B+\eta_{\rm d}(U-d_u''-\widehat{d}(u,v))\ge \underline B+\eta_{\rm d}d_v'$ at $v$. Thus, there is sufficient battery at $v$ to reach ${\tt s}_v$.}. Otherwise, in the best possible scenario, the drone can reach $u$ with SoC at most $\overline B-\eta_{\rm d}d_u''$, next it can visit a sequence of charging stations (only if the distance $\widehat{d}$ between two  successive such stations is at most $U$), then, from the last station, it has to reach $v$ such that the SoC at $v$ is at least $\underline B+\eta_{\rm d}d_v'$ (so that there is sufficient battery to reach ${\tt s}_v$). In particular, the distance from $u$ to the first charging station on this path should be at most $U-d_u''$. Similarly, the distance from the last station on the path to $v$ should be at most $U-d_v'$. This explains the definition of the graph $G$ in line~\ref{ss1} of the procedure.    	
\end{itemize}

\begin{algorithm}[!t]
	\footnotesize
	\caption{${\tt Find\mbox{-}plan}\big[V,d\big]$}	
	\begin{algorithmic}[1]
		\State Compute pairwise shortest distances $\{\widehat{d}(u,v)\}_{u,v}$ on weighted directed graph $G_0=(V,2\binom{V}{2})$
		\For{each $u,v\in V$}
		\State $(\widetilde d(u,v)$, $\cP(u,v)) \leftarrow {\tt Init\mbox{-}distances}\big[V,\widehat{d},u,v\big]$
		\EndFor
		\State Consider the weighted directed graph $G=(V,E;\widetilde d)$ where $E=2\binom{V}{2}$
		\State $\mathcal{F}_0 \leftarrow$ find a tour using the Hungarian algorithm on $G$
		\State $\mathcal{F}\leftarrow$ {\tt Fix\mbox{-}plan}$[G,\mathcal{F}_0]$ 
		\State $b'(\cdot)\leftarrow$ {\tt Fix\mbox{-}charge}$[\mathcal{F},b(\cdot)]$ 
		\State \Return {($\mathcal{F},b'(\cdot)$)}			
	\end{algorithmic} 
	\label{alg:plner}
\end{algorithm}
\begin{algorithm}[!t]
	\footnotesize
	\caption{${\tt Init\mbox{-}distances}\big[V,\widehat{d},u,v\big]$} 
	\begin{algorithmic}[1]
		\If{$\widehat{d}(u,v) \leq U-d_u''-d_v'$}
		\State $\widetilde{d}(u,v)\leftarrow \widehat{d}(u,v)$, \quad
		$\cP(u,v)\leftarrow \{(u,v)\}$
		\State \Return {$(\widetilde d(u,v)$, $\cP(u,v))$}
		\Else
		\parState {Construct a weighted directed graph  $G=(\mathcal{C}\cup\{u,v\},E;w)$ where \label{ss1}\\
			$E\triangleq\big\{\{u,z\}:~z\in\mathcal{C},~\widehat{d}(u,z)\le U-d_u''\big\} \bigcup$ $\big\{\{v,z\}:~z\in\mathcal{C},~\widehat{d}(v,z)\le U-d_v'\big\} \bigcup$  $\big\{\{z,z'\}:~z,z'\in\mathcal{C},~\widehat{d}(z,z')\le U\big\}$\\
			and $w(z,z') \triangleq \widehat{d}(z,z')$ for all $z,z'\in \mathcal{C}\cup\{u,v\}$ }
		\parState {$\cP(u,v) \leftarrow $ shortest path between $u$ and $v$ in $G$ (with a set of edge lengths $\{w(u,v)\}_{u,v}$) }
		\State $\widetilde d(u,v) \leftarrow $ length of $\cP(u,v)$
		\State \Return {$(\widetilde d(u,v)$, $\cP(u,v))$} 
		\EndIf
	\end{algorithmic} 
\end{algorithm}
\begin{algorithm}[!t]
	\footnotesize
	\caption{${\tt Fix\mbox{-}plan}\big[G,\mathcal{F}_0\big]$} 	
	\begin{algorithmic}[1]
		
		\State $\mathcal{F}\leftarrow \emptyset$
		\For{each $(u,v)$ in $\mathcal{F}_0$}   
		\State Add $\cP(u,v)$ to $\mathcal{F}$
		\EndFor  
		\State Add to $\mathcal{F}$ a set of sub-tours $\left\{\{(u,{\tt s'}_u),({\tt s'}_u,{\tt s''}_u),({\tt s''}_u,u) \}:~u\in V\right\}$
		\For{$u\in V$}
		\If {$\mathcal{F}\setminus\{(u,{\tt s'}_u),({\tt s'}_u,{\tt s''}_u),({\tt s''}_u,u)\}$ is feasible} 
		\State $\mathcal{F}\leftarrow \mathcal{F}\setminus\{(u,{\tt s'}_u),({\tt s'}_u,{\tt s''}_u),({\tt s''}_u,u)\}$
		\EndIf
		\EndFor 
		\State \Return {$\mathcal{F}$}
	\end{algorithmic} 
	\label{alg:fix-tour}
\end{algorithm}
\begin{algorithm}[!t]
	\footnotesize
	\caption{${\tt Fix\mbox{-}charge}\big[\mathcal{F},b(\cdot)\big]$} 
	\begin{algorithmic}[1]
		
		\State Let $\mathcal{F}_{i_1},\ldots, \mathcal{F}_{i_r}$ be the charging stations, in the order they appear on $\mathcal{F}$
		\For {$j=0,1,\ldots,r$} 
		\State \mbox{$\displaystyle D_j= \eta_{\rm d}\sum_{k=i_{j}}^{i_{j+1}-1} c_{f}(\mathcal{F}_k,\mathcal{F}_{k+1})d(\mathcal{F}_k,\mathcal{F}_{k+1})$}
		\EndFor   
		\For {$j=1,\ldots,r$}
		\State $B_j\triangleq \eta_{\rm c}\sum_{k=1}^{j} b(\mathcal{F}_{i_k})$
		\EndFor
		\For{$j=r$ {\bf downto} $1$}
		\State $\displaystyle b'(\mathcal{F}_{i_j})=\max\{0,\frac{1}{\eta_{\rm c}}(\underline{B} - x_0 \mbox{+} \sum_{k=0}^rD_k - \sum_{k=1}^{j-1}B_k)\}$ 
		\If {$b'(\mathcal{F}_{i_j})>0$}
		\State {\bf exit}
		\EndIf
		\EndFor 
		\State \Return {$b'(\cdot)$}						
	\end{algorithmic} 
	\label{alg:fix-charge}
\end{algorithm}

\begin{itemize}
	\item ${\tt Fix\mbox{-}plan}\big[G,\mathcal{F}_0\big]$: Given the flight mission plan $\mathcal{F}_0$ returned by the Hungarian algorithm considering the weights $\widetilde{d}$, this procedure reconstructs a feasible flight mission plan $\mathcal{F}$ for {\sf SDFP}. First, each edge $(u,v)$ is replaced in the  flight mission plan by the corresponding path $\cP(u,v)$. Since the resulting mission plan may still be infeasible, the procedure adds to every site a round trip to the closest charging stations (from and to the site). Finally, the added stations are dropped sequentially in a greedy manner as long as feasibility is maintained. 
	
	\item ${\tt Fix\mbox{-}charge}\big[\mathcal{F},b(\cdot)\big]$: Starting from the flight mission plan $(\mathcal{F},b(\cdot))$ returned by ${\tt Fix\mbox{-}plan}\big[G,\mathcal{F}_0\big]$, this procedure finds a minimal amount of recharging energy, according to Lemma~\ref{lem:red}.   
	
\end{itemize}

Let ${\sf OPT}_{\sf DFP}$ and ${\sf OPT}_{\sf SDFP}$ be the optimal solutions of {\sf DFP} and {\sf SDFP}, respectively.

\bigskip
\begin{lemma}[ \cite{sundar2013algorithms}]
	\label{lem:GS}
	The flight mission plan $\mathcal{F}$ returned by algorithm ${\tt Find\mbox{-}plan}\big[V,d]$ has cost $\widehat{d}(\mathcal{F})\le \left(\frac{(1+\alpha+\alpha\beta) \log(|T|))}{1-\alpha}\right) {\sf OPT}_{\sf SDFP}$. 
\end{lemma}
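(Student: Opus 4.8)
The plan is to follow the adaptation of Christofides' algorithm to the battery-feasible setting from \cite{KMM2011gsp}, carrying the whole analysis out on the auxiliary metric $\widetilde{d}$ produced by ${\tt Init\mbox{-}distances}$ rather than on the raw distances $\widehat{d}$. First I would record the structural facts about $\widetilde{d}$ that the rest of the argument needs: it is a genuine metric on $V$ (being a shortest-path length in the feasibility graph $G$ built in ${\tt Init\mbox{-}distances}$), it dominates the raw distance, $\widehat{d}(u,v)\le\widetilde{d}(u,v)$, and — crucially — whenever an edge $(u,v)$ is not directly traversable, i.e.\ $\widehat{d}(u,v)>U-\widehat{d}_u-\widehat{d}_v$, the standing assumption (which gives $\widehat{d}_u,\widehat{d}_v\le\alpha\frac{U}{2}$ for every site) forces $U-\widehat{d}_u-\widehat{d}_v\ge(1-\alpha)U$ and hence $\widetilde{d}(u,v)\ge\widehat{d}(u,v)>(1-\alpha)U$. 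This last inequality is what lets every charging detour be charged back to the length of the edge that necessitated it.

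Next I would set up the lower bound linking the surrogate problem to ${\sf OPT}_{\sf SDFP}$. Take an optimal plan $\mathcal{F}^\star$ and delete from it every charging station, leaving a cyclic order on the required vertices $\mathcal{S}\cup\{v_0\}$. Between two sites consecutive in this order, $\mathcal{F}^\star$ traverses a battery-feasible sub-walk, whose length is therefore at least the $\widetilde{d}$-distance of its endpoints, by the very definition of $\widetilde{d}$. Summing, the induced Hamiltonian cycle on $\mathcal{S}\cup\{v_0\}$ has $\widetilde{d}$-length at most $\widehat{d}(\mathcal{F}^\star)={\sf OPT}_{\sf SDFP}$, so the minimum $\widetilde{d}$-weight TSP tour, and in particular any minimum spanning tree $T$ under $\widetilde{d}$, costs at most ${\sf OPT}_{\sf SDFP}$. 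Now the standard Christofides bookkeeping applies verbatim over the metric $\widetilde{d}$: shortcutting the optimal $\widetilde{d}$-tour onto the odd-degree vertices of $T$ (using the triangle inequality for $\widetilde{d}$) splits it into two matchings, so the minimum matching $M$ satisfies $\widetilde{d}(M)\le\frac12{\sf OPT}_{\sf SDFP}$, and the Eulerian tour $\mathcal{F}_0$ extracted from $T\cup M$ obeys $\widetilde{d}(\mathcal{F}_0)\le\frac32{\sf OPT}_{\sf SDFP}$.

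Finally I would account for the cost of turning $\mathcal{F}_0$ into a genuinely feasible plan in ${\tt Fix\mbox{-}plan}$. Replacing each edge $(u,v)$ of $\mathcal{F}_0$ by its realizing path $\cP(u,v)$ keeps the $\widehat{d}$-length at $\widetilde{d}(u,v)$ but may break feasibility once the paths are chained; this is repaired by attaching to each site a round trip to its nearest station ${\tt s}_u$ (cost $2\widehat{d}_u\le\alpha U$) and then greedily deleting these detours while feasibility survives. The inequality from Step~1 does the real work: a detour is only ever retained next to an edge that was not directly feasible, and for such an edge $\widetilde{d}(u,v)>(1-\alpha)U$, so its detour cost is a bounded fraction $\tfrac{\alpha}{1-\alpha}$ of the edge length; accumulating these charges over the tour according to the accounting of \cite{KMM2011gsp} gives $\widehat{d}(\mathcal{F})\le\frac{1+\alpha}{1-\alpha}\,\widetilde{d}(\mathcal{F}_0)\le\frac32\Big(\frac{1+\alpha}{1-\alpha}\Big){\sf OPT}_{\sf SDFP}$. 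The main obstacle is exactly this last step: one must verify that the greedy removal never leaves a detour that cannot be amortized against a long (non-feasible) edge, and that after removal the plan is still feasible end-to-end with $x_k\in[\underline{B},\overline{B}]$ throughout — the charging/amortization bookkeeping, rather than the Christofides part, is where all the care is needed.
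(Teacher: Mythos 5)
You should first note that the paper contains no proof of this lemma at all: it is imported as a black box from \cite{KMM2011gsp}, so your reconstruction has to be judged against that argument. Your skeleton is the right one (surrogate distances $\widetilde{d}$, Christofides run on them, a lower bound obtained by projecting the optimal plan onto the sites, then a feasibility fix-up paid for by station detours), but two of your load-bearing steps are wrong. The first is the claim that $\widetilde{d}$ is ``a genuine metric'': it is not. Take $U=10$, $\alpha=0.8$ (so stations must lie within $\alpha U/2=4$ of each site), sites $u=(0,0)$, $w=(2,0)$, $v=(4,0)$, and one station directly above each site at height $4$. Both short pairs satisfy the direct test $\widehat{d}\le U-\widehat{d}_u-\widehat{d}_w=2$, so $\widetilde{d}(u,w)+\widetilde{d}(w,v)=4$, while the pair $(u,v)$ has $\widehat{d}(u,v)=4>2$ and must be routed through a station, giving $\widetilde{d}(u,v)=2\sqrt{20}\approx 8.9$. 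So the triangle inequality fails badly, and ``shortcutting the optimal $\widetilde{d}$-tour onto the odd vertices of $T$'' is not a legitimate step. The repair is the same projection device you already use for your MST bound: shortcut the optimal \emph{feasible walk} itself onto $V_0$. Each subwalk between consecutive odd-degree vertices is battery-feasible (it arrives at $u$ with SoC at most $\overline{B}-\eta_{\rm d}\widehat{d}_u$ and must leave $v$ able to reach a station), hence its $\widehat{d}$-length is at least the $\widetilde{d}$-distance of its endpoints; the induced cycle on $V_0$ splits into two perfect matchings, so $\widetilde{d}(M)\le\frac12{\sf OPT}_{\sf SDFP}$ with no metric assumption on $\widetilde{d}$.

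The second and more serious error is that your detour amortization charges the wrong edges. You assert that a detour survives the greedy removal only next to an edge that is not directly traversable, and charge its cost $\le\alpha U$ against that edge's length $>(1-\alpha)U$. But indirect edges are exactly the edges that never need detours: their realizing paths $\cP(u,v)$ pass through charging stations, where the SoC resets to $\overline{B}$. Retained detours are forced by the opposite configuration, namely long chains of short \emph{direct} edges --- many mutually close sites whose cumulative length exceeds the range $U$ --- and in that case there is no adjacent indirect edge for your charging scheme to use, so the accounting collapses. The correct bookkeeping is temporal, not edge-local: a detour must be kept at a site only when the tour length accumulated since the last station visit, plus the next hop and the safety margins $\widehat{d}_u\le\alpha U/2$, would exhaust the range, which under the standing assumption can happen only after the tour has advanced by more than $(1-\alpha)U$ since the previous refill; each kept detour costs at most $\alpha U$, and since these stretches overlap so that each edge is charged at most twice, the overhead factor is $1+\frac{2\alpha}{1-\alpha}=\frac{1+\alpha}{1-\alpha}$, which combined with the Christofides factor $\frac32$ gives the lemma. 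Note also an internal inconsistency in your version: a per-edge charge of $\frac{\alpha}{1-\alpha}$ would yield the factor $\frac{1}{1-\alpha}$, not the $\frac{1+\alpha}{1-\alpha}$ you then write down --- a further sign that the amortization as you stated it does not close.
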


The below theorem establishes that algorithm ${\tt Find\mbox{-}plan}\big[V,d]$ has an asymptotic constant-factor approximation guarantee for {\sf DFP}. 

\bigskip
\begin{theorem}\label{thm:main}
	The flight mission plan $(\mathcal{F},b'(\cdot))$ returned by algorithm ${\tt Find\mbox{-}plan}\big[V,d]$ has cost $$\tau(\mathcal{F})+\tau_{\rm c}(b'(\mathcal{F}))=O({\sf OPT}_{\sf DFP})+O(1).$$
\end{theorem}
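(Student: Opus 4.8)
The plan is to bound the cost of the returned plan by a chain of inequalities that descends from the time objective of {\sf DFP} to the distance objective of {\sf SDFP}, invokes the black-box guarantee of Lemma~\ref{lem:GS}, and then climbs back up to ${\sf OPT}_{\sf DFP}$. Write ${\rm ALG}\triangleq\tau(\mathcal{F})+\tau_{\rm c}(b(\mathcal{F}))$ for the (feasible, by construction of ${\tt Fix\mbox{-}plan}$ and ${\tt Fix\mbox{-}charge}$) plan $(\mathcal{F},b'(\cdot))$ returned by the algorithm, and let $\mathcal{F}^\star$ denote an optimal {\sf DFP} tour. The target is the single chain
$$
{\rm ALG}\ \le\ \overline{c}\,d(\mathcal{F})\ \le\ \tfrac{\overline{c}}{\underline{c}_f}\,\widehat{d}(\mathcal{F})\ \le\ \tfrac{\overline{c}}{\underline{c}_f}\cdot\tfrac32\Big(\tfrac{1+\alpha}{1-\alpha}\Big){\sf OPT}_{\sf SDFP}\ \le\ \tfrac{\overline{c}\,\overline{c}_f}{\underline{c}_f\,\underline{c}}\cdot\tfrac32\Big(\tfrac{1+\alpha}{1-\alpha}\Big)\big({\sf OPT}_{\sf DFP}-c'\big),
$$
in which every factor is a fixed constant determined by the drone parameters, so the right-hand side is of the form $\text{const}\cdot{\sf OPT}_{\sf DFP}+\text{const}=O({\sf OPT}_{\sf DFP})+O(1)$ (here $-c'\ge0$ since $x_0=\overline{B}>\underline{B}$).

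For the upper (left-to-right) half, I would first use that the flight time of the returned plan is $\tau(\mathcal{F})=c_{a}\,d(\mathcal{F})$ by the linear assumption~\eqref{eqn:linearcost}, and that its charging time is minimal because it was produced by ${\tt Fix\mbox{-}charge}$; hence Lemma~\ref{lem:red} bounds $\tau_{\rm c}(b(\mathcal{F}))$ linearly in $d(\mathcal{F})$ plus the offset $\tfrac{\underline B-x_0}{\eta_{\rm c}}$. Together these give exactly the upper bound of Lemma~\ref{lem:tot}(2), namely ${\rm ALG}\le\overline{c}\,d(\mathcal{F})+c'\le\overline{c}\,d(\mathcal{F})$. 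Next I would pass from actual distance $d$ to the modified (energy-weighted) distance $\widehat{d}$: since every edge carries weight $c_f(u,v)\,d(u,v)\ge\underline{c}_f\,d(u,v)$ and straight-line $d$ obeys the triangle inequality, each shortest path satisfies $\widehat{d}(u,v)\ge\underline{c}_f\,d(u,v)$, so summing over the edges of the returned tour gives $d(\mathcal{F})\le\widehat{d}(\mathcal{F})/\underline{c}_f$. Applying Lemma~\ref{lem:GS} to $\widehat{d}(\mathcal{F})$ completes this half.

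For the lower (right-hand) half I would reduce {\sf SDFP} to {\sf DFP} through the optimal tour $\mathcal{F}^\star$. The key observation is that any {\sf DFP}-feasible tour is also {\sf SDFP}-feasible: recharging to the cap $\overline{B}$ at each station only increases available energy, the SoC is still capped at $\overline{B}$, and the per-edge SoC drop in {\sf SDFP} is $\eta_{\rm d}\widehat{d}(u,v)\le\eta_{\rm d}c_f(u,v)d(u,v)$, no larger than the drop in {\sf DFP}; thus feasibility transfers. Consequently ${\sf OPT}_{\sf SDFP}$ is at most the {\sf SDFP}-cost of $\mathcal{F}^\star$, which is $\sum\widehat{d}(\mathcal{F}^\star_k,\mathcal{F}^\star_{k+1})\le\overline{c}_f\,d(\mathcal{F}^\star)$ because $\widehat{d}(u,v)\le c_f(u,v)d(u,v)\le\overline{c}_f\,d(u,v)$ on the direct edge. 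Finally, applying the lower bound of Lemma~\ref{lem:tot} to the optimal plan $\mathcal{F}^\star$ yields $\underline{c}\,d(\mathcal{F}^\star)+c'\le{\sf OPT}_{\sf DFP}$, i.e. $d(\mathcal{F}^\star)\le({\sf OPT}_{\sf DFP}-c')/\underline{c}$, which closes the chain.

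I expect the main obstacle to be the reduction between the two problems rather than the approximation step itself, which Lemma~\ref{lem:GS} hands us for free. Concretely, the delicate points are (i) the feasibility transfer ${\sf DFP}\Rightarrow{\sf SDFP}$, which legitimizes bounding ${\sf OPT}_{\sf SDFP}$ by the length of the {\sf DFP}-optimal tour, and (ii) the careful bookkeeping of the three conversions — actual distance $d$ versus energy distance $\widehat{d}$ (which introduces the ratio $\overline{c}_f/\underline{c}_f$ that would collapse to $1$ in a uniform-wind environment), and distance versus time (which introduces $c_a,c_b,\eta_{\rm c},\eta_{\rm d}$ and the additive offset $c'$). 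One must verify that all additive offsets remain bounded by fixed constants so that they aggregate into the single $O(1)$ term, and that the multiplicative constants compose into one instance-independent factor; this is where the ``asymptotic'' qualifier in the statement comes from.
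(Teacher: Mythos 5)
Your proposal is correct and takes essentially the same route as the paper's own proof: bound the returned plan's cost by $\overline{c}\,d(\mathcal{F})+c'$ via Lemma~\ref{lem:red}, pass from $d(\mathcal{F})$ to $\widehat{d}(\mathcal{F})$ via $\underline{c}_f$, invoke Lemma~\ref{lem:GS}, bound ${\sf OPT}_{\sf SDFP}$ by $\widehat{d}(\mathcal{F}^\star)\le\overline{c}_f\,d(\mathcal{F}^\star)$ through the feasibility transfer, and close with the lower bound of Lemma~\ref{lem:tot} — exactly the chain the paper assembles, with the identical constant. Your explicit justifications of the {\sf DFP}$\Rightarrow${\sf SDFP} feasibility transfer and of $\widehat{d}(u,v)\ge\underline{c}_f\,d(u,v)$ (via the triangle inequality) merely spell out steps the paper leaves implicit.
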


\begin{proof}
	See the supplementary materials.
\end{proof}

%\noindent\textcolor{red}{(Removed loop-end from algorithm to save space(e.g., end if)).}

\subsection{Practical Adaptations}\label{sec:ext}

The preceding section explored a basic setting of {\sc DFP} and its efficient algorithms. In practice, an automated drone management system may require more sophisticated options. Here, we provide two extensions to the above algorithms to obtain heuristics for further practical applications.
\begin{enumerate}
	\item {\bf Wind Uncertainty:} Under steady wind conditions, we assume in the preceding algorithms that $c_{f}(u, v)$ stays constant on the designated path from $u$ to $v$. To account for wind volatility, it should be more precisely represented by $c_{f}(u, v, w)$, where $w$ is a vector whose value lies in an uncertain domain $w \in W$. For example, $W$ can be defined by the anticipated speed and orientation ranges $[\underline{|w|}, \overline{|w|}]$, $[\underline{\theta}_w, \overline{\theta}_w]$. To adjust the algorithms, one can proceed conservatively by replacing $c_{f}(u,v)$ with $\overline{c}_{f}(u, v) = \max_{w \in W}  c_{f}(u, v, w)$.
	\item {\bf Variable Drone Speed:} During monitoring or patrolling missions, the drone may need to vary its speed uniformly at certain designated paths in $V$. In this case, we run the algorithms sequentially in multiple rounds, with an increasing drone speed at each round, until reaching infeasibility (higher speed may result in insufficient battery to reach some sites, hence resulting no feasible solution). Then we will enumerate all the optimal solutions across the rounds to find the best solution with the lowest total flight time. By enumerating the possibilities of different drone speeds, the algorithms can identify an optimal flight mission plan.
\end{enumerate}

\section{Experimental Validation and Case Studies}

As one demonstration, we implement the proposed approach in an automated drone management system and verify the produced flight mission plans and recharging strategies in a real-world experiment. Additionally, to complement the analytic results derived in Sec.~\ref{sec:ndl}, the average-case performance and scalability of the featured planner are scrutinized extensively through numerical simulations and diverse case studies.

\subsection{Drone Management System}
The system interface, depicted in Fig~\ref{fig:app}, allows users to specify individual goals and visualize the computed flight mission plan. The system connects to a cloud server, which accepts location data from the users and computes the optimal flight mission plan. Then, the drone is programmed to follow the pre-computed flight mission plan. 
For dynamic tracking, the data from onboard sensors, including GPS, video feed, and SoC can be fetched continuously to monitor the real-time flight status of the drone. Should notably deviant sensor measurements be detected from the values estimated in the pre-computed mission plan, real-time calibration can be performed to find the minimum adjustment to the previous plan.

\begin{figure}[h!]
	\centering 
	\includegraphics[width=.75\linewidth]{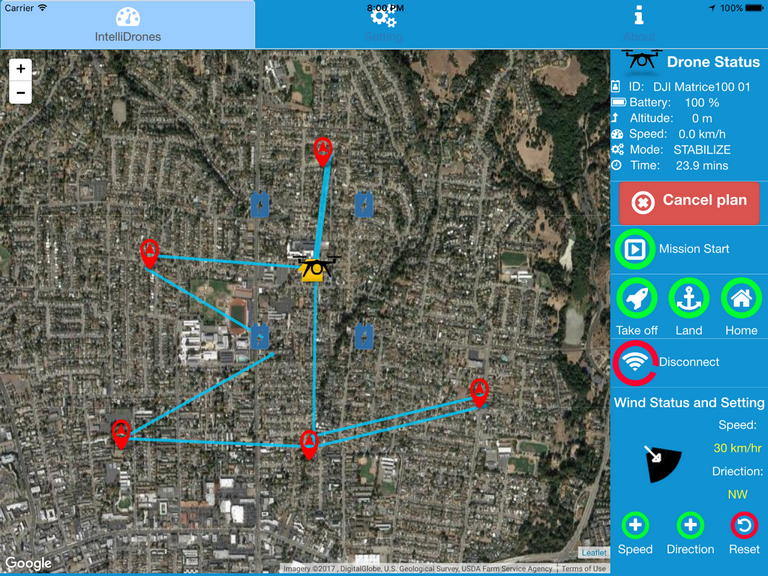}%\vspace{-1mm}  
	\caption{User interface of drone management system. }
	\label{fig:app}
\end{figure}%\vspace{-3mm}

\subsection{Field Experiment}\label{exp}

To validate the practicality of the proposed planner and corroborate the power consumption model developed in Sec.~\ref{sec:battery}, a long-distance test mission was carried out in a real-world uncontrolled environment. Particularly, DJI Matrice 600 was deployed to patrol a rural area, delimited in Fig.~\ref{fig:drone_experiment_map}, with $4$ sites of interest and a charging station. The drone, which has a maximum flight duration of 38 minutes and a payload capacity of 6 kg, was loaded with a 2 kg weight and programmed to fly at 120-meter altitude maintaining 30 km/h horizontal speed. The lower bound on the battery SoC was set to 40\%. The average wind speed was estimated to 30 km/h, as measured by a portable wind meter at the start- and end-points of the route. Video footage of the experiment is available online\footnote{\url{https://www.dropbox.com/s/mez5hnyg7czkwyf/drone_tsp_nk_50mb.mp4?dl=0}}.

\begin{figure}[!t]
\center
\captionsetup[subfigure]{labelformat=empty}
\subfloat[]{  \includegraphics[width=0.9\linewidth]{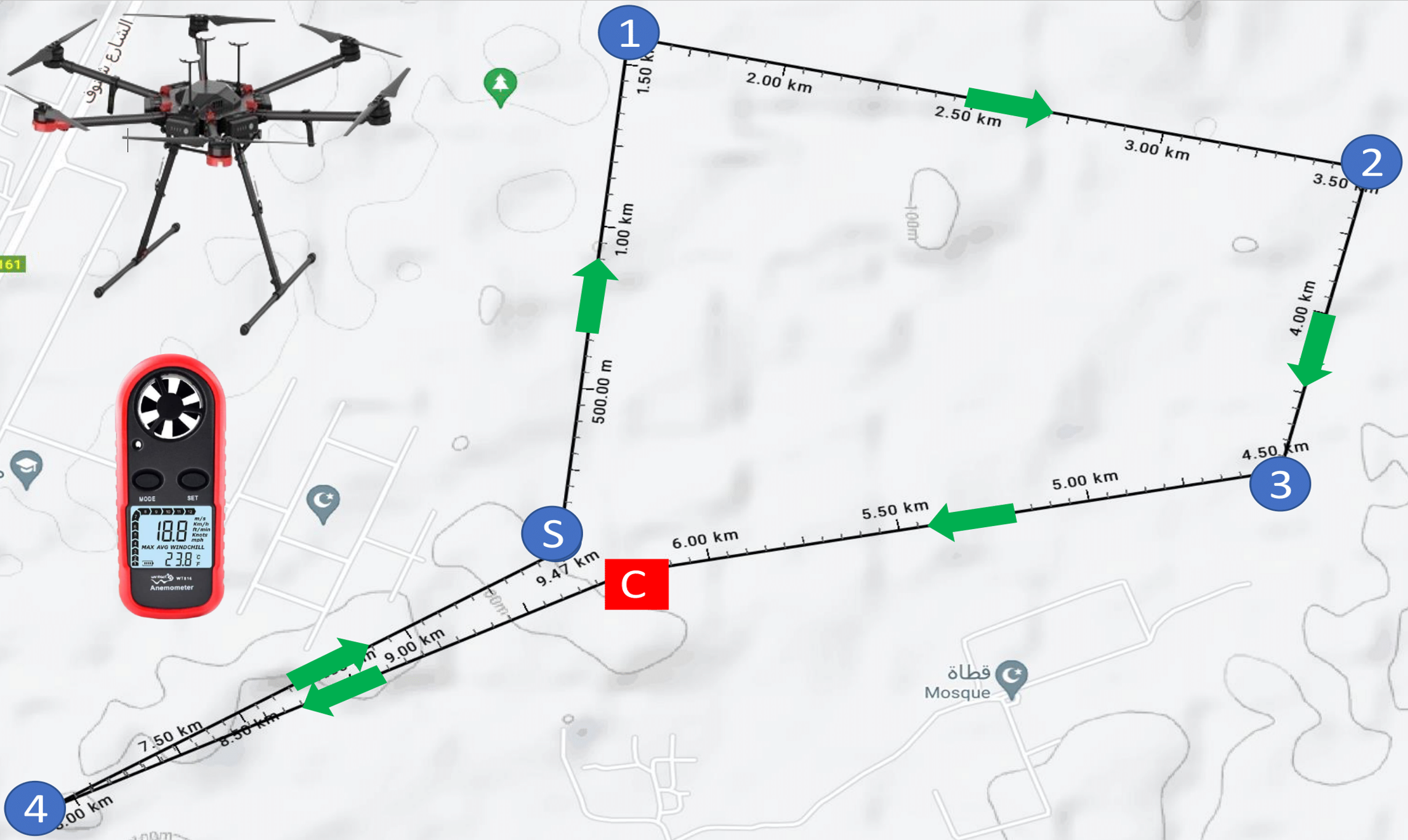}}\vspace{-7mm}

\subfloat[]{  \includegraphics[width=.915\linewidth]{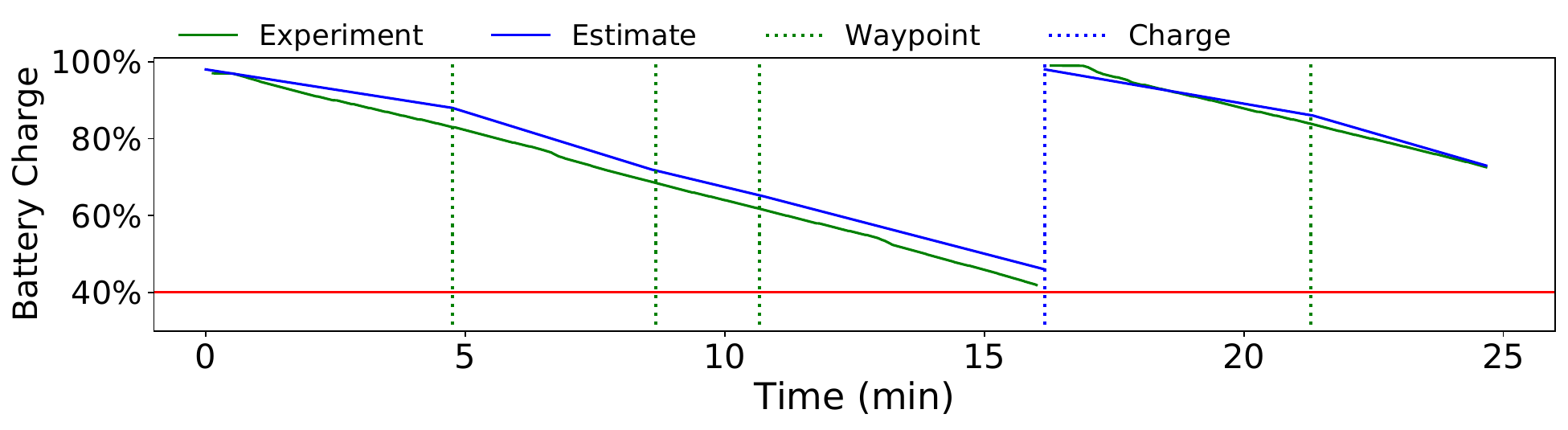}}
\caption{The setup and results of the long-distance mission experiment with DJI Matrice 600. The illustration on top captures the map of the area and locations of the target sites, where S denotes the starting point and C locates the charging station. The bottom plot collates the measured and predicted battery consumption of DJI Matrice 600, with vertical green dashed lines representing the target sites and the dashed blue line representing the charging station.}
\label{fig:drone_experiment_map}
\end{figure}

As displayed in Fig.~\ref{fig:drone_experiment_map}, the proposed planner in Alg.~\ref{alg:fix-tour} returned the flight mission route [S, 1, 2, 3, C, 4, S], which has a total distance of 9.47 km and includes one recharging stop. The detour to C resulted from the selected minimum battery SoC limit of 40\% (pictured as a red line in Fig.~\ref{fig:drone_experiment_map}). From the experiment results, we observe a slight discrepancy (of around 5\%) between the actual battery energy levels and the estimates provided by the regression model developed in Sec.~\ref{sec:battery}. This is attributed primarily to inaccurate wind profile as the measurements were taken only at one location.

\subsection{Case Studies}

\noindent{\bf Setup:} The studied scenario considers four sites of interest and four charging stations. Fig.~\ref{fig:simsetup} depicts the locations of the sites (as black points), charging stations (as blue squares) and the base (as magenta triangle) wherefrom the test drone (3DR Solo) departs for the mission. Positioning of these nodes and their inter-distances are based on locations in a suburban community in Abu Dhabi.

The following two major sets of studies were undertaken.
\begin{itemize}

\item {\itshape  Study 1}: Here, 4 sub-cases were examined under different wind and payload conditions. In the first two, 3DR Solo is equipped only with one battery and the average wind speed is set to 18 km/h. Then we double the battery capacity with the same wind condition in the remaining two sub-cases. Since the battery capacity was doubled, extra weight was added to the drone. Table~\ref{tab:setuptable} summarizes the parameters of all the sub-cases, which are denoted by $\tt{S_1C_1}$ to $\tt{S_1C_4}$.
\item {\itshape Study 2}: To evaluate the planner's performance under uncertainties, we vary the wind speed and orientation within a certain range then investigate whether a feasible solution can be obtained. We select the sub-case with the shortest total trip time in Study 1 (i.e., $\tt{S_1C_1}$) and then gradually elevate the level of uncertainties. Specifically, four sub-cases, denoted by $\tt{S_2C_1}$ to $\tt{S_2C_4}$, are considered wherein the wind speed and orientation are varied from 9 to 21 km/h and from 0$^\circ$ to 360$^\circ$ in four discrete scales (e.g., $\tt{S_2C_1}$ assumes the wind speed could range from 9 to 12 km/h with an orientation lying within $[-45^\circ, 45^\circ]$).
\end{itemize}

\begin{figure}[!t]
	\hspace{5pt}
	%\hspace{-10pt}
	\begin{minipage}[!htb]{0.53\linewidth}
		\centering
		{\includegraphics[width=.8\linewidth]{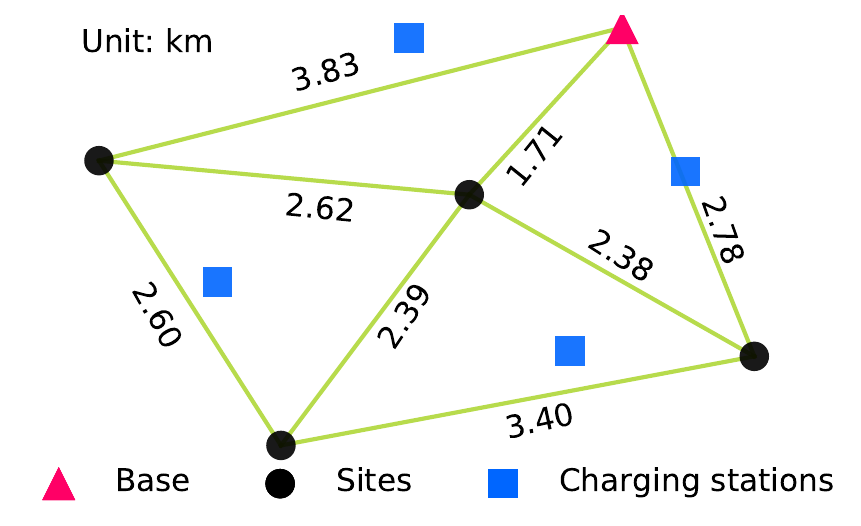}} %\vspace{-2mm}
		\captionof{figure}{Map of the simulated environment.}%the sites, charging stations and base.}
		\label{fig:simsetup}
	\end{minipage}\hfill
	\begin{minipage}[h]{0.38\linewidth}
		\centering
		\scalebox{0.8}{\begin{tabular}{@{}c|@{}c@{}|@{}c@{}|c@{}} %{0.1\textwidth}
				\hline
				\hline
				Case & Battery & $\vec{w}_{xy}$ & $m$\\
				& (Wh) & & (g)\\
				\hline
				1 & 70 & South & 0\\
				2 & 70 & North-East & 0\\
				3 & 140 & South & 500\\
				4 & 140 & North-East & 500\\
				\hline
				\hline
		\end{tabular}}
		\captionof{table}{Parameters of Study 1.}
		\label{tab:setuptable}
	\end{minipage}%\vspace{2mm}
\end{figure}

\begin{figure}[!b]
	\center
	\captionsetup[subfigure]{labelformat=empty}
	\subfloat[]{  \includegraphics[clip,width=0.7\linewidth]{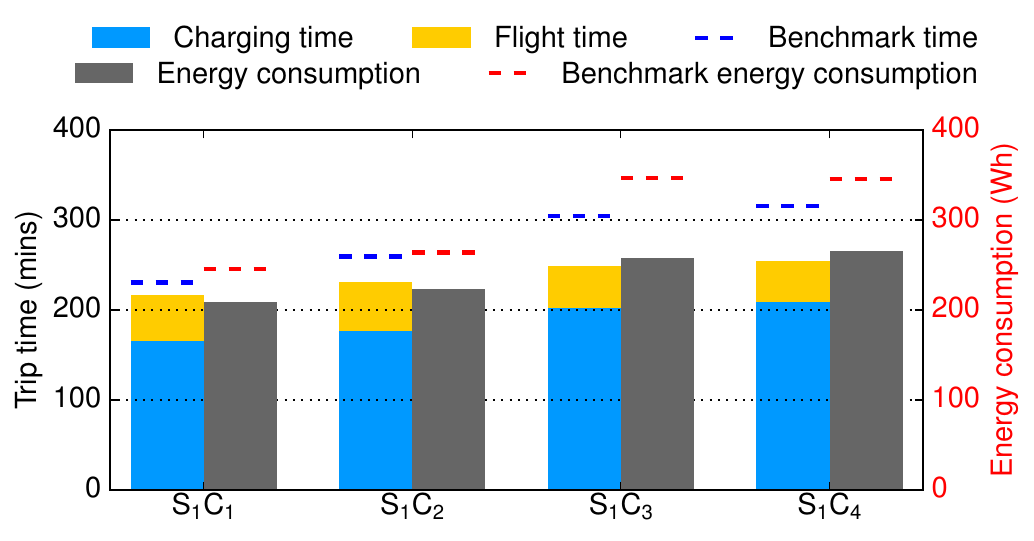}}\vspace*{-7.5mm}
	
	\subfloat[]{\hspace{4pt}\includegraphics[trim=0 0 0 19mm, clip,width=0.7\linewidth]{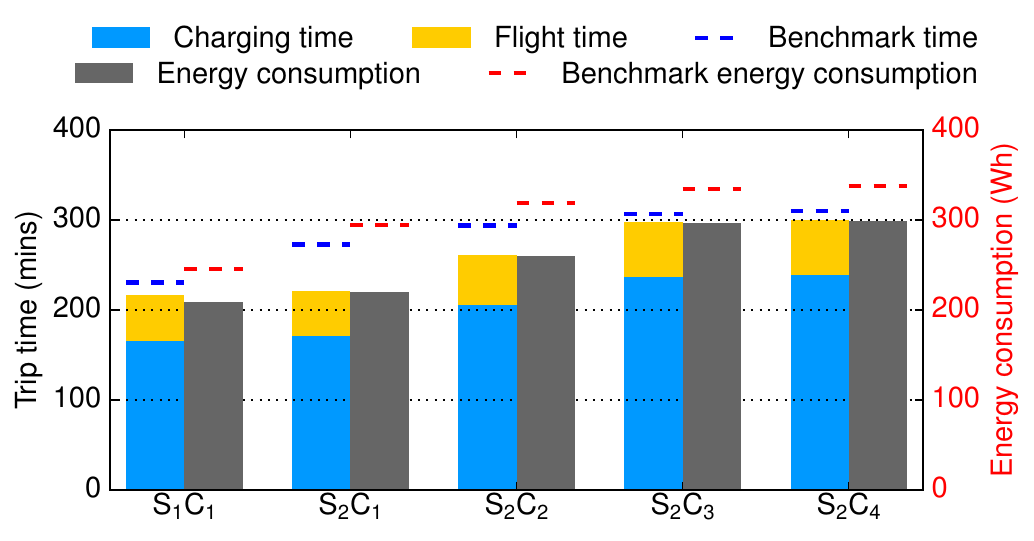}}
	\caption{Trip time and energy consumption of 3DR Solo in Study 1 and Study 2 on top and bottom, respectively. $\tt{S_1C_1}$ is appended to the bottom plot merely for comparison purposes.}
	\label{fig:simstat1-2}
\end{figure}

\begin{figure*}[!t]
\centering
\subfloat[Case 1]{\label{fig:eva1}\includegraphics[width=0.255\textwidth]{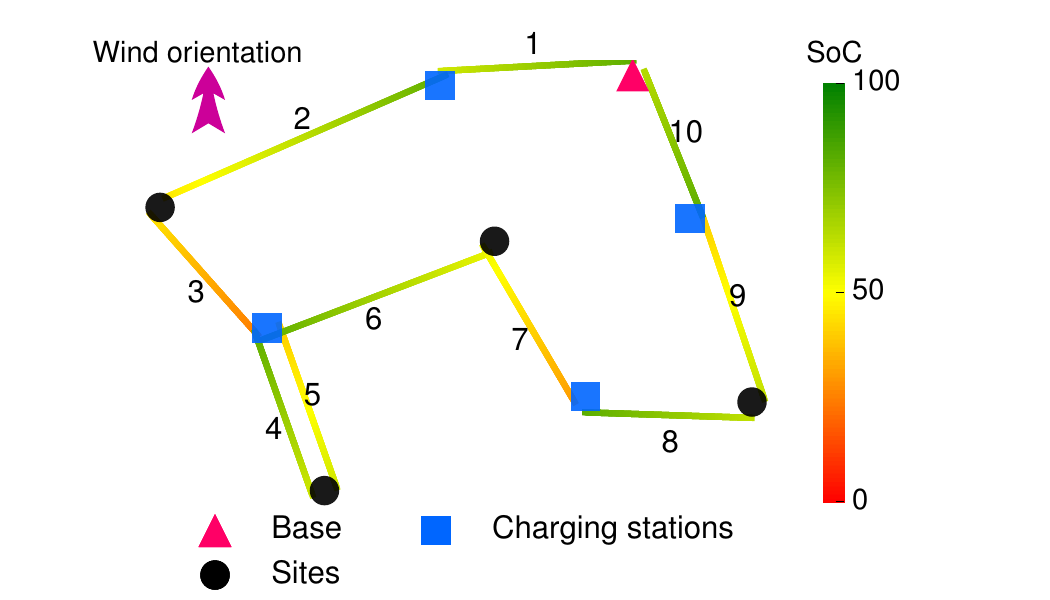}}
\subfloat[Case 2]{\label{fig:eva2}\includegraphics[width=0.255\textwidth]{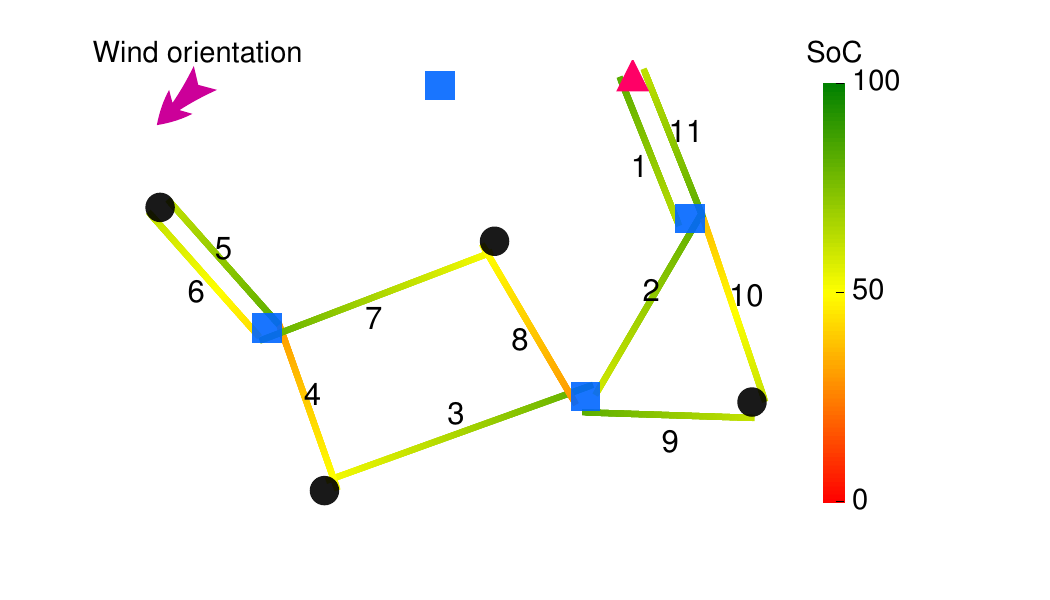}}
\subfloat[Case 3]{\label{fig:eva3}\includegraphics[width=0.255\textwidth]{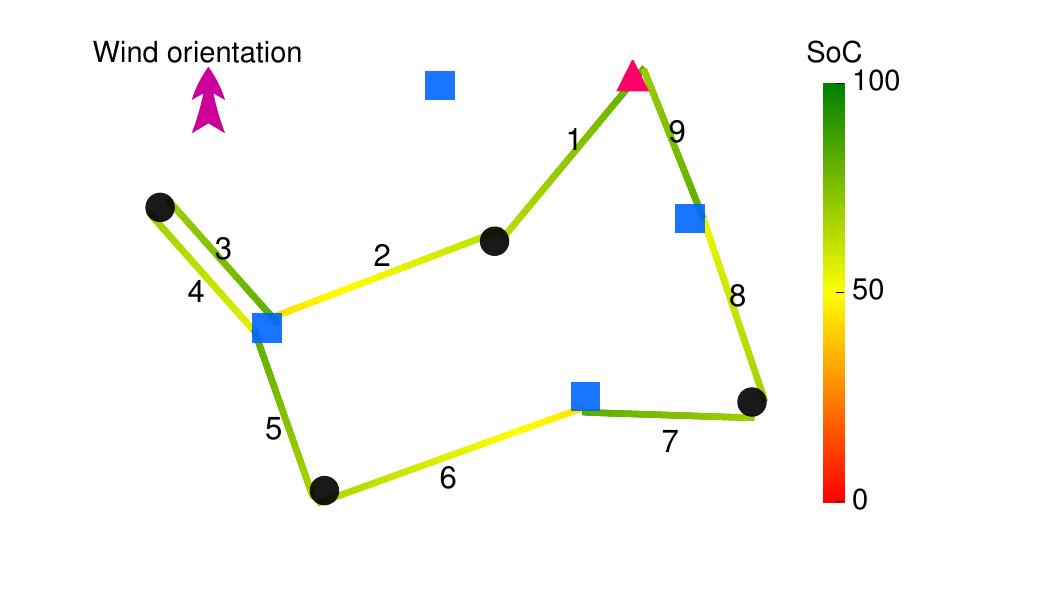}}
\subfloat[Case 4]{\label{fig:eva4}\includegraphics[width=0.255\textwidth]{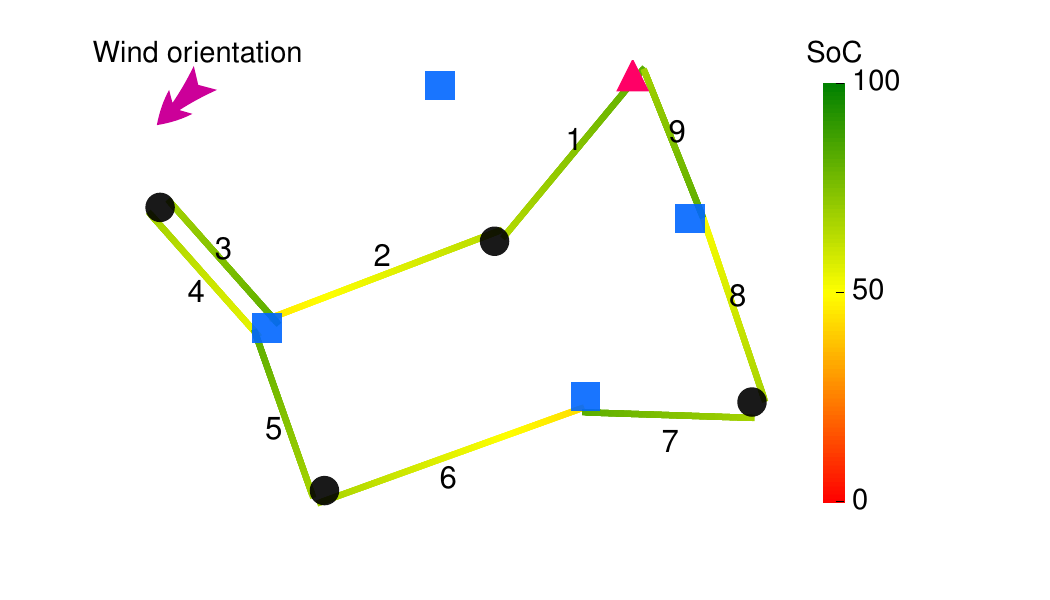}}

\subfloat[Case 1]{\label{fig:evaa}\includegraphics[width=0.22\textwidth]{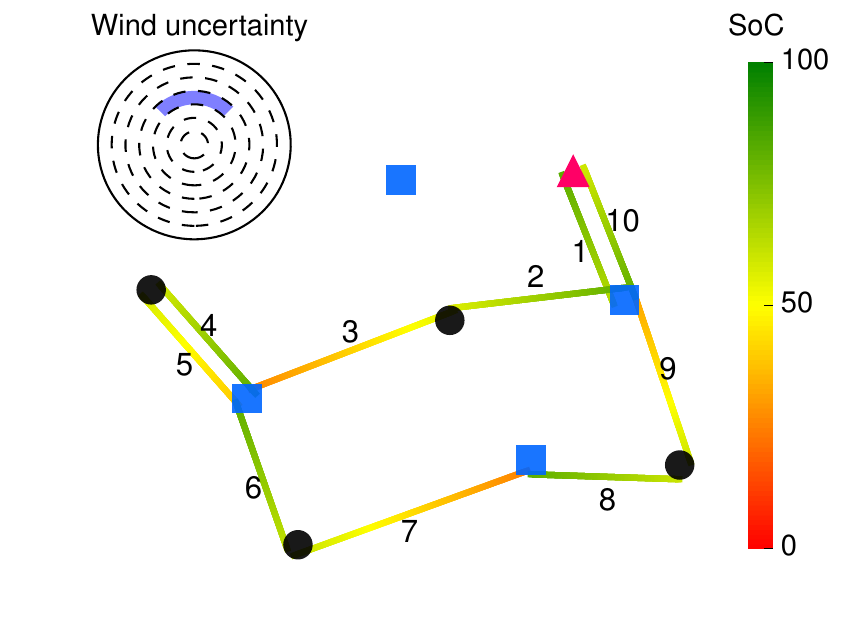}}\hspace{16pt}
\subfloat[Case 2]{\label{fig:evab}\includegraphics[width=0.22\textwidth]{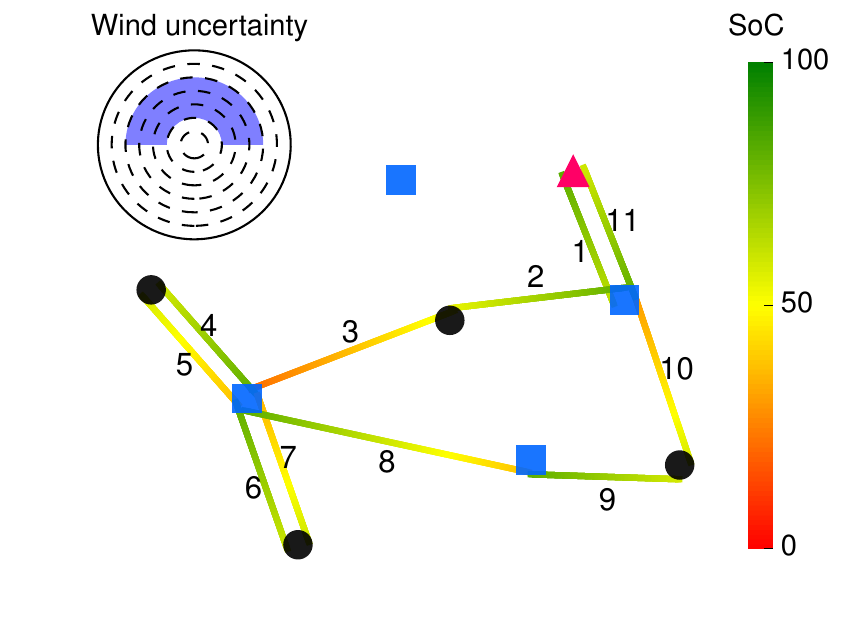}}\hspace{16pt}
\subfloat[Case 3]{\label{fig:evac}\includegraphics[width=0.22\textwidth]{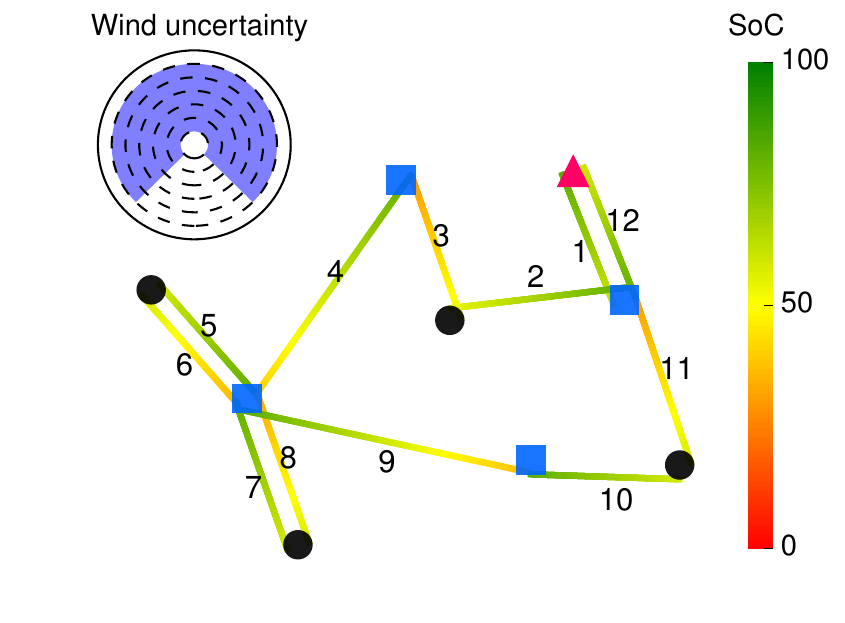}}\hspace{10pt}
\subfloat[Case 4]{\label{fig:evad}\includegraphics[width=0.22\textwidth]{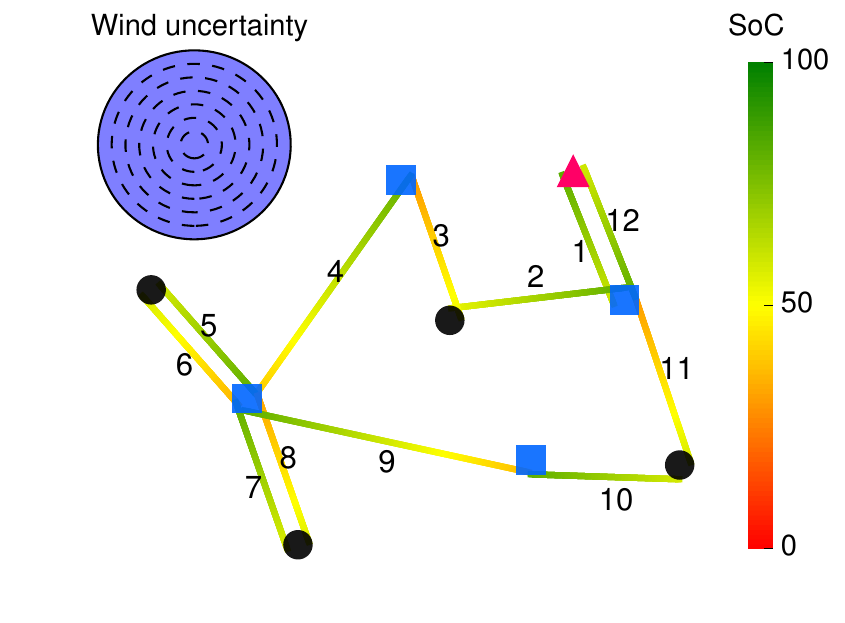}}
 \caption{Visualized results of Study 1 (a-d) and Study 2 (e-h).}%\vspace{-4mm}
\label{fig:sim1-2}
\end{figure*}

\noindent{\bf Results:} For comparison, we employ a {\em benchmark} strategy that routes the drone to the nearest unvisited site, and flies to a charging station if the battery SoC drops below a preset threshold. The minimum SoC that could fly to a nearest charging station from any site serves as the threshold.

Fig.~\ref{fig:sim1-2} visualizes the flight mission plans produced by the proposed approach for both case studies, while Fig.~\ref{fig:simstat1-2} draws a comparison against the benchmark algorithm in terms of the travel time and energy consumption. In Fig.~\ref{fig:sim1-2}, the numbers indicate the path order of the drone, colors represent the battery SoC, wind speeds and orientations are pictured on the upper-left corners.

\noindent{\itshape Study 1}: %We normalize the travel time and energy consumption by the results of $\tt{S_1C_1}$ (i.e., the lowest travel time and energy consumption).%, as listed in Table \ref{tab:casetable}.
The following two findings transpire: (1) The north-east wind lead to higher energy consumption than the south wind. Besides, longer travel time is observed due to increased charging duration. (2) Increasing battery capacity of 3DR Solo did not induce reduced travel time. We observe that though the flying time in $\tt{S_1C_3}$ is the shortest, it takes more time to charge since the drone becomes heavier by carrying extra weight for the battery, consequently resulting in longer total travel time.

\iffalse
%correct table
\begin{table}[!htb]
\centering
\begin{tabular}{@{}c|c|c|c|c@{}} %{0.1\textwidth}
\hline
\hline
	& $\tt{S_1C_1}$ & $\tt{S_1C_2}$ & $\tt{S_1C_3}$ & $\tt{S_1C_4}$\\
\hline
Travel Time & 1 & 1.07 & 1.15 & 1.20\\
Energy Consumption & 1 & 1.07 & 1.24 & 1.48\\
\hline
\hline
\end{tabular}
\caption{Simulation 1 comparison}
\label{tab:casetable}
\end{table}
\fi
%Fig.~\ref{fig:sim1-2} visualizes the results of Simulation 2. The numbers indicate the path order for the drone. We represent the ranges of wind speeds and orientations as the shaded areas on the upper-left corners. We plot the travel time and energy consumption of Simulation 2 in Fig.~\ref{fig:simstat1-2}.

\noindent{\itshape Study 2}: As deduced from Fig.~\ref{fig:simstat1-2}, the energy consumption increases with the rising level of uncertainties. To that extent, the worst scenario is captured by $\tt{S_2C_4}$, in which the drone may always fly into a tailwind. Thus, $\tt{S_2C_4}$ provides the most conservative result. Also, for high uncertainty levels, the observed performance gap (w.r.t. the trip time and energy consumption) between the proposed approach and the benchmark diminished. This is due to the latter taking more frequent recharging decisions in provision for heightened uncertainties.

\iffalse
\begin{figure}[!htb]
\center
\centering\includegraphics[width=.8\linewidth]{figs/simstat.pdf}
\caption{Results of Simulation 1.}
\label{fig:simstat}
\end{figure}

\begin{figure}[!htb]
\center
\centering\includegraphics[width=.8\linewidth]{figs/simstat2.pdf}
 \caption{Results of Simulation 2.}
\label{fig:simstat2}
\end{figure} 
\fi

%\end{enumerate}

\subsection{Numerical Analyses}
To conclude the evaluation, we investigate the proposed approach's average-case performance and scalability via simulations on randomly generated large-scale instances with up to $200$ vertices. Therein, the number of charging stations is set to $\frac{1}{5}$th the number of vertices, which are positioned uniformly at random on a canvas area of 3.33 km by 3.33 km. The drone speed is fixed to 18 km/h and the battery capacity to $80$ Wh. Wind speed is sampled uniformly at random from $[0, 3.6]$ km/h with a direction chosen uniformly from $[215^\circ, 235^\circ]$. %Moreover, for each number of vertices, the test was performed ten times to compute the uncertainty in the values.

Fig.~\ref{fig:ATSP_approx} plots the {\em empirical} and {\em theoretical} approximation ratios attained by Alg.~\ref{alg:fix-tour} for {\sf SDFP} across 10 runs at 95\% confidence interval. The optimal solutions of {\sf SDFP} were computed by a numerical solver based on the MILP formulation provided in the supplementary materials. In the simulations, Alg.~\ref{alg:fix-tour} exhibited favorable average-case performance, far surpassing its theoretical worst-case guarantees stated in Lemma~\ref{lem:GS}. Furthermore, as the number of vertices grew, the empirical approximation ratio of Alg.~\ref{alg:fix-tour} improved, approaching near-optimality.
\begin{figure}[!t]%\vspace{-3mm}
	\subfloat[]{\label{fig:ATSP_approx}\includegraphics[width=.47\linewidth]{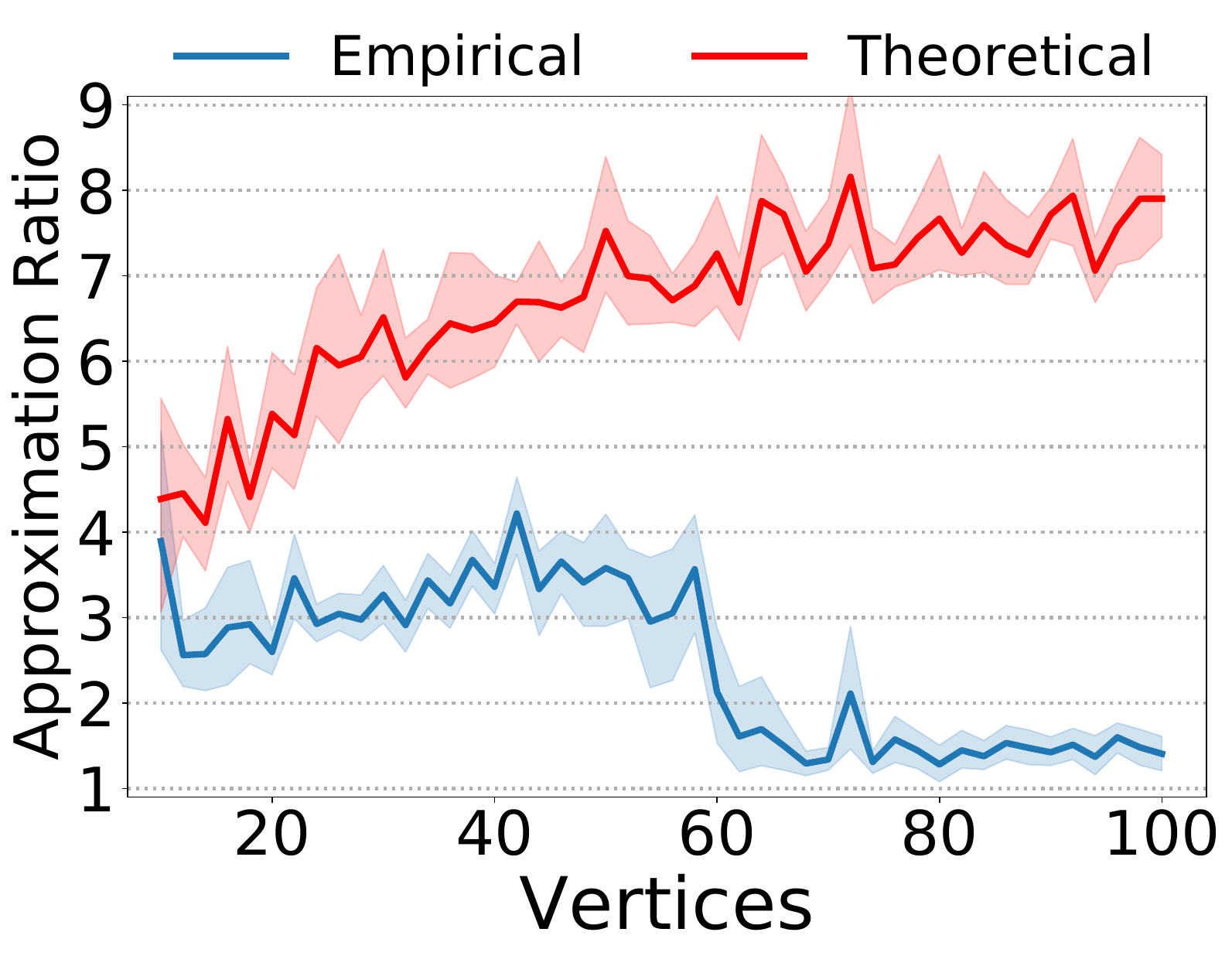}}\hspace{2pt}
	\subfloat[]{\label{fig:ATSP_gas_time}\includegraphics[width=.515\linewidth]{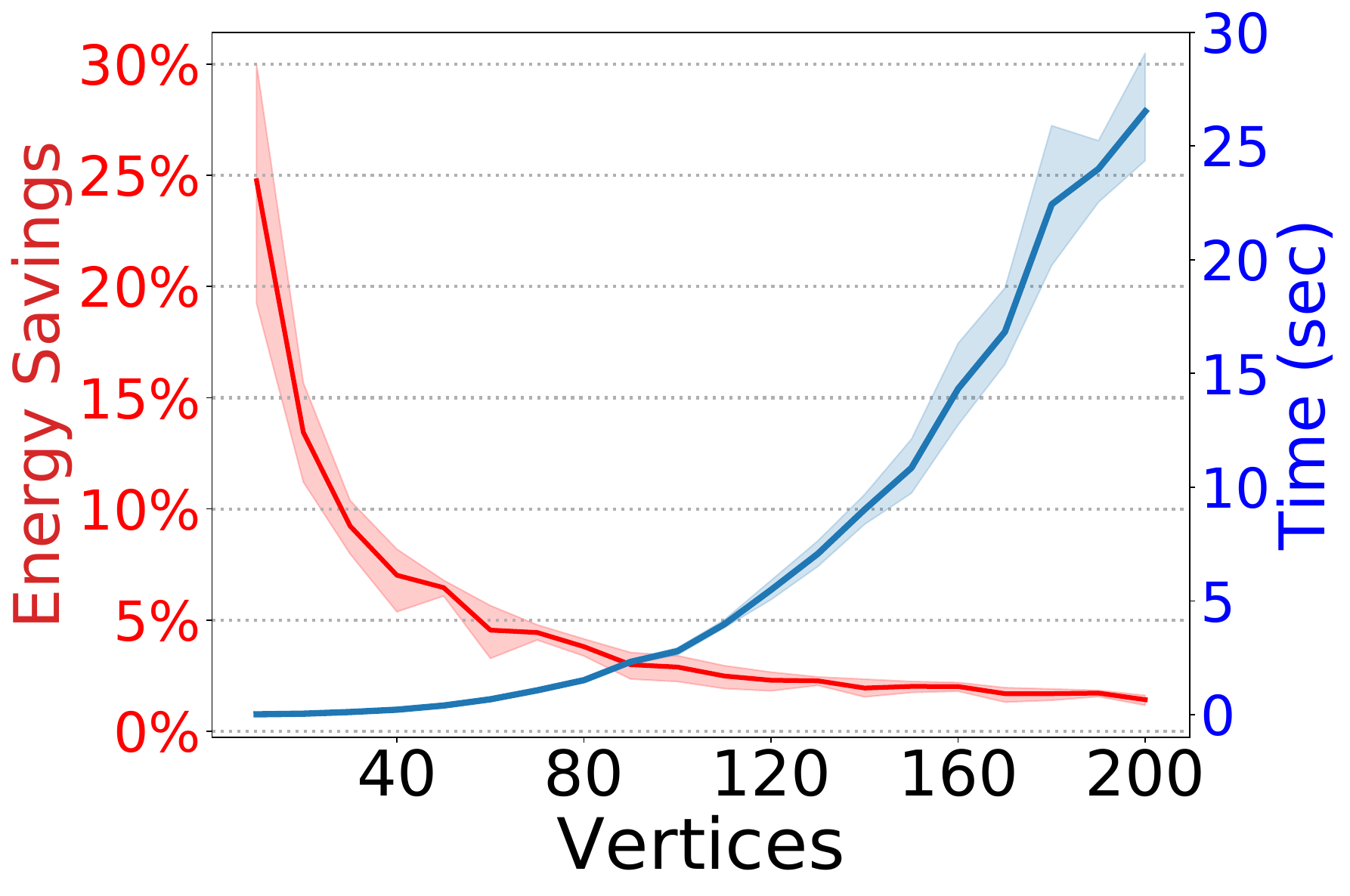}}
	\caption{ The algorithms' performance against the number of vertices at 95\% confidence interval: (a) Approximation ratio (b) Running time and recharging optimization gains.}%\vspace{-5mm}
\end{figure}

Lastly, Fig.~\ref{fig:ATSP_gas_time} explores the average computational complexity (over $10$ runs) of the planner introduced in Alg.~\ref{alg:plner} (implemented in Python 3 on an Intel i9-9900k CPU) along with the total {\em energy savings} ensued from recharging optimization (Alg.~\ref{alg:fix-charge}). For up to $200$ vertices, the measured running time did not exceed $30$ seconds, which could be further reduced in a computationally more efficient programming environment (e.g., C++). With respect to recharging optimization, the achieved energy savings were more substantial for small-scale instances, in which the share of partial recharges was relatively more prominent.

%Fig.~\ref{fig:ATSP_gas_time} shows the running time of Alg.~\ref{alg:fix-tour} (implemented in Python 3 on an Intel i9-9900k CPU with 32 GB of RAM) along with the total {\em energy savings} of the solution after invoking recharging optimization (Alg.~\ref{alg:fix-charge}). Although our implementation is not optimal (written in Python 3), the algorithm's running time scales reasonably well to more than 200 vertices. 
%Furthermore, we observe the energy savings are more significant with a lower number of vertices, primarily due to the diminishing return of partially charging the battery at certain charging stations (Alg.~\ref{alg:fix-charge}).
% as most while the larger problems show slight improvement as they are optimized by the main algorithm where only necessary charging stations are added.

\section{Prospective Extensions}\label{exte}

%\begin{enumerate}
\subsection{Fleet Mission Planning with Recharging Optimization}
		In extending the current problem setup to a more general setting with multiple UAVs, we differentiate between two scenarios, namely with shared (i.e., supports charging multiple drones simultaneously) and non-shared charging stations. For instance, a charging site with multiple adjacently placed inductive recharging landing pads (such as those mentioned in~\cite{chittoor2021review}) exemplifies the shared setup, whereas the one with a single inductive pad corresponds to the non-shared scenario. In the former case, one can avail the polynomial-time transformation proposed by Bellmore and Hong~\cite{bellmore1974transformation} to convert the problem to an equivalent instance of standard TSP on an expanded graph, which can then \textit{be directly tackled} by the algorithm proposed in Alg.~\ref{alg:plner} while retaining the approximation guarantee in Theorem~\ref{thm:main}. In the resultant solutions, several UAVs may be assigned to charge simultaneously at a single station, hence the requirement of shared charging stations. 
		
		 %\textcolor{red}{One heuristic approach is to remove the corresponding stations from the graph once a drone's charging strategy has been fixed. However, the output solutions might yield arbitrarily worse objective values as compared to the optimum, thereby stimulating future research into developing efficient approximation algorithms for this problem.}
	    On the other hand, for the scenario where shared charging is not supported, the problem transforms into joint mission planning and \textit{recharging scheduling}. One sub-optimal approach is to further extend the adapted MILP formulation of {\sf SDFP} with additional constraints that cater for multi-UAV routing, as detailed in Sec.~\ref{multiuav} in the supplementary materials. As indicated by the simulations results reported in Sec. C in the supplementary materials, the augmented formulation can handle sufficiently large instances. However, since the formulation lacks representation of time, it allows a charging station to be used only once. The output solutions might yield arbitrarily worse objective values compared to the optimum (with recharging scheduling), thereby stimulating future research into developing efficient approximation algorithms for this problem.

\subsection{Mission Planning with Mobile Charging Stations}
        Another promising avenue to explore is the extension to the case with mobile charging stations. Such a setup allows for more flexible and robust drone management system that caters for uncertainties. In~\cite{majid17charger}, an autonomous robotic mobile platform (see Fig.~\ref{fig:charging}) was proposed that can recharge drones automatically. The system includes a tethered rover that is capable of autonomous navigation, and is equipped with a robotic arm carrying an inductive charging pad, which can flexibly recharge drones of varying sizes and shapes from different positions. With this dynamic model, the problem would additionally include optimizing the number of rovers and their routing.

	\begin{figure}[ht]
		\centering
		\includegraphics[width=.7\linewidth]{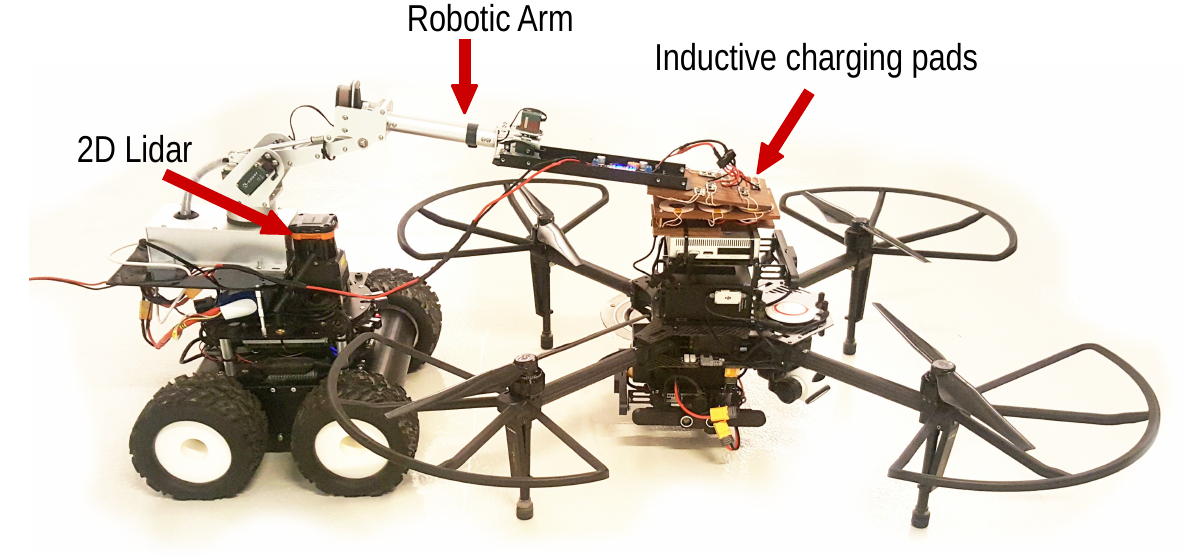}%\vspace{-2mm}
		\caption{Robotic mobile charging station for drones.}%\vspace{2mm}
		\label{fig:charging}
	\end{figure}

%\end{enumerate}

%\section{Automated Drone Management System}

\iffalse
\begin{figure}[htb!]
\centering
\begin{minipage}{0.48\linewidth}
\includegraphics[width=.9\linewidth]{figs/app.png}  
\caption{User interface of drone management system.}
\label{fig:app}
\end{minipage} 
\begin{minipage}{0.51\linewidth}
\includegraphics[width=1.2\linewidth]{figs/charging3.pdf}
	\caption{Robotic charging system.}
	\label{fig:charging}
\end{minipage}
\end{figure}
\fi

\color{black}

\iffalse
\subsection{Recharging Process}

We next describe the operations of recharging process. Once a drone is landed and a recharging command is initiated, the robotic rover proceeds as follows:
\begin{enumerate}
	\item {\em Finding the drone:} From raw 2D LIDAR point cloud, we use a  clustering technique to separate  the points of the drone from the background points. 
	\item{\em Classifying the drone and detecting its orientation}: Based on supervised learning, drone information (e.g., type, size, orientation, etc.) can be extracted from point cloud data. We consider a single drone, and its orientation is detected using a rectangle boundary model. The orientation of the drone is needed to instruct the rover to navigate to a position that supports a better alignment for the charging pad with the receiver on the drone. 
	This allows more flexibility in the charging pad design (e.g., rectangular inductive charging pad  instead of a circular pad) to fit more inductive coils on the drone.
	\item {\em Autonomously navigating towards the drone}: the exact position of the navigation goal is determined based on the allowable charging position.
	\item {\em Recharging}: Position the charging pad on the drone  and adjust the positioning using current sensors readings.
\end{enumerate}

Once a drone is fully charged, a termination command is initiated to the charging system. Then the rover returns to the charging system to resume its idle state. Also, the drone management system actively communicates with the robotic charging system to track the charging status.

\fi

%\vspace{-10pt}
\section{Conclusion}

To consolidate the practical applications of drones, this paper developed and experimentally verified an automated tour management system for an energy-constrained UAV deployed on long-distance flight missions (e.g., for monitoring or data acquisition purposes). Through extensive experimentation and analysis, we derived an effective power consumption estimation model for multi-copter UAVs and validated it on multiple drones. With this model, we formulated the energy-constrained tour management problem as a multi-objective extension of ATSP and developed an efficient mission planning algorithm with certifiable performance guarantees. Future work can be directed towards developing approximation algorithms for extended variants of the problem with multiple UAVs and non-shared/mobile charging stations.

\bibliographystyle{IEEEtran}
\bibliography{paper}
\setcounter{equation}{0}
\renewcommand{\theequation}{S.\arabic{equation}}
\appendix
%This document includes the theoretic analysis and experimental parameters which were temporarily omitted from the main paper due to space limitation.

\subsection{Details of Test Drones}

\begin{table}[!htb]
	\centering
	\resizebox{\linewidth}{!}{
	\begin{tabular}{@{}r|c|c|c} %{0.1\linewidth}
		\hline
		\hline
		& 3DR Solo & DJI Matrice 100 & DJI Matrice 600 \\
		\hline
		Weight &  2 kg & 2.8 kg & 9.1 kg\\
		Dimensions & 25cm $\times$ 46cm & 46cm $\times$ 46cm & 166.8cm $\times$ 151.8cm\\
		Battery & 5200 mAh 14.8V  & 5700 mAh 22.8V & 4500 mAh 22.2V ($\times$6)\\
		Battery Weight & 500 g & 600 g & 595 g($\times$6)\\
		Motors &  880 kV ($\times$4) & 350 kV ($\times$4) & 130 kV ($\times$6)\\
		Max Speed & 20 km/h & 60 km/h  & 65 km/h \\
		Max Altitude & 120 m (FAA Regulation) &  120 m (FAA Regulation) &  120m (FAA Regulation)\\
		Charging Duration & 90 mins & 180 mins & 92 mins\\
		Software & Python Dev Kit  & DJI SDK \& ROS & DJI SDK \& ROS\\
		\hline \hline
	\end{tabular}  
	}
	\caption{Specifications of test drones.}%\vspace{-10pt}
	\label{tab:drone-solo}
\end{table}

\subsection{Mixed-Integer Linear Formulation of {\sf SDFP}}

This section provides the mixed-integer linear programming formulation of {\sf SDFP} adapted from~\cite{sundar2013algorithms}.  

Given a complete directed graph $G=(V,E)$ with $V$ as vertices and $E$ representing the set of edges, define an integer variable $x_{ij}$ which determines the number of edges from vertex $i$ to $j$ in $V$ (note that it may exceed one for edges between charging stations). Let $L=\overline{B}-\underline{B}$ be the available battery capacity and $\eta_{\rm d}\widehat{d}(i,j)$ be the energy consumption on edge $(i,j) \in E$. Denote by $p_{ij}$ the flow from $i$ to $j$, by $r_j$ the energy level after reaching vertex $j$, and by $v_0$ the base. With this notation, {\sf SDFP} can be recast into the following mixed-integer linear program:

\begin{align}
	&\min_{x,p,r} \sum_{(i,j)\in E} c_{ij} x_{ij} \notag\\
%\text{subj} &\text{ect to degree constraints:}\\
\text{subj} &\text{ect to:}\notag\\
	&\sum_{i\in V\backslash\{k\}} x_{ik} = \sum_{i\in V\backslash \{k\}} x_{ki} \quad \forall k \in V \label{eq1}\\
	&\sum_{i\in V\backslash \{k\}} x_{ik} =1 \quad \forall k \in \mathcal{S} \cup \{v_0\}, \label{eq2}\\
%\text{capa}&\text{city and flow constraints:} \\
	&\sum_{i\in V} (p_{v_0i}-p_{iv_0}) =|\mathcal{S}| -1  \label{eq3} \\
	&\sum_{j\in V\backslash \{i\}} (p_{ji}-p_{ij}) =1 \quad \forall i\in \mathcal{S}\backslash v_0  \label{eq4} \\
	&\sum_{j\in V\backslash \{i\}} (p_{ji}-p_{ij}) =0 \quad \forall i\in \mathcal{C}  \label{eq5} \\
	&0 \leq p_{ij} \leq |\mathcal{S}| x_{ij} \quad \forall i,j \in V  \label{eq6}\\
%\text{fuel}&\text{ constraints:} \\
&	r_j-r_i + \eta_{\rm d}\widehat{d}(i,j) \leq M(1-x_{ij}) \quad \forall i,j \in \mathcal{S}  \label{eq7} \\
&	r_j-r_i + \eta_{\rm d}\widehat{d}(i,j) \geq -M(1-x_{ij}) \quad \forall i,j \in \mathcal{S}  \label{eq8} \\
&	r_j-L + \eta_{\rm d}\widehat{d}(i,j) \geq -M(1-x_{ij}) \quad \forall i \in \mathcal{C}, j \in \mathcal{S}  \label{eq9} \\
&	r_j-L + \eta_{\rm d}\widehat{d}(i,j) \leq M(1-x_{ij}) \quad \forall i \in \mathcal{C}, j \in \mathcal{S}  \label{eq10} \\
&	r_i -\eta_{\rm d}\widehat{d}(i,j) \geq -M(1-x_{ij}) \quad \forall i \in \mathcal{S}, j \in \mathcal{C}  \label{eq11}\\
	& 0 \leq r_i \leq L \quad \forall i \in \mathcal{S} \label{eq12}\\
	& x_{ij} \in \{0,1\} \quad \forall i,j \in V \quad \text{(either $i$ or $j$ is a target)}\\
	& x_{ij} \in \{0,1,2,...,|\mathcal{S}|-1\} \quad \forall i,j \in \mathcal{C},
\end{align} 

\noindent where Constr.~\eqref{eq1} balances the in-degree and out-degree of each vertex and Constr.~\eqref{eq2} ensures that each site is visited once only. The equations in~\eqref{eq3}-\eqref{eq6} account for arc flow conservation, assuring that the $|\mathcal{S}|$ units of outbound flow from the depot are equally distributed across the target sites such that each receives exactly one unit of flow. The flight energy requirements are captured by Constrs.~\eqref{eq7}-\eqref{eq11}, where $M$ stands for a sufficiently large constant. For a UAV traveling from site $i$ to $j$, Constrs.~\eqref{eq7}-\eqref{eq8} guarantee that the energy level at $j$ is $r_j= r_i - \eta_{\rm d}\widehat{d}(i,j)$. Whereas for the flight from a charging station $i$ to a target site $j$, ~\eqref{eq9}-\eqref{eq10} enforce the battery charge at $j$ to be $r_j= L - \eta_{\rm d}\widehat{d}(i,j)$.

%states that there are $|\mathcal{S}|$ units and each unit is delivered to a site of interest.
%Constraint~7-12 are fuel constraints, for a UAV traveling from $i$ to $j$ constraint 7 and 8 states that the energy level at $j$ is $r_j= r_i - \eta_{\rm d}\widehat{d}(i,j)$. $M$ is a large constant used to unbound a constraint when the edge is not selected, and $v_0$ is the starting point.

%Where constraint~1 and 2 are degree constraints, and constraint~3-6 are capacity and flow constraints. Constraint~7-12 are fuel constraints. $v_0$ is the starting point.

\subsection{Multi-UAV Mission Planning with Non-shared Charging Stations}\label{multiuav}
Here, we extend the MILP formulation of {\sf SDFP} to the setting with multiple UAVs and non-shared charging stations. Let $N$ ($N \leq  |\mathcal{S}|$) be the number of UAVs stationed at the base $v_0$. Then the problem can be defined as follows:
\begin{align}
	&\min_{x,p,r} \sum_{(i,j)\in E} c_{ij} x_{ij}\notag\\
\text{subject to} \quad&\eqref{eq1}, \eqref{eq3} -  \eqref{eq12}\notag\\
	&\sum_{i\in V\backslash \{k\}} x_{ik} =1 \quad \forall k \in \mathcal{S}\backslash v_0,\label{mu1}\\
    &\sum_{i\in V\backslash \{k\}} x_{ik} \leq 1 \quad \forall k \in \mathcal{C}, \label{mu2}\\
    &\sum_{i\in V\backslash \{v_0\}} x_{v_0i} = N ,\label{mu3}\\
	& x_{ij} \in \{0,1\} \quad \forall i,j \in V
\end{align} 

In the above program, Constr.~\eqref{mu1} states that $N$ number of edges will be selected from the starting node. Considering that each charging station is allowed to be utilized only once, as enforced by Constr.~\eqref{mu2}, the tours assigned to UAVs will have non-intersecting paths. Given that all the tours are disjoint, the energy constraints in~\eqref{eq7}-\eqref{eq12} become independent of the number of UAVs since they rely only on the visited target sites. Note that this formulation assumes uniform battery capacity $L$ for all the drones. Also, the above program stipulates dispatching exactly $N$ drones, however one can replace the equality in Constr.~\eqref{mu3} with inequalities to regulate the number of deployed drones. 

%Where eleven constraints are transcribed from the SDFP MILP. The number of UAVs is represented as $\mathcal{N}$ ($\mathcal{N} \leq  |\mathcal{S}|$), and $v_0$ is the starting point. Here, we extend the MILP formulation of {\sf SDFP} to the setting with multiple UAVs and non-shared charging stations.

To investigate the scalability of the above MILP, we performed a set of simulations on a 100x100 2D surface. The sites of interest and charging stations were placed randomly (uniform selection) on the surface with a three-to-one ratio of sites to stations. The distance and fuel cost between nodes were computed based on the Euclidean metric, and each drone's batter capacity $L$ was set to $120$. The setup considers three different numbers of UAVs and a total number of vertices ranging from $20$ to $100$ in steps of $5$. The simulations were implemented with Python and Gurobi solver on an Intel i9-9900k CPU with 32 GB of RAM.
 %between the number of sites of interest and charging stations

Fig.~\ref{fig:multi-agent-milp} depicts the computational time of the MILP program across $30$ runs at 99\% confidence interval and the average tour cost per UAV. As observed from the figure, for large-scale instances with $100$ vertices, the running time did not exceed $100$ seconds. Also, as expected, the average tour cost per UAV decreased with growing fleet size.

As an illustration of multi-UAV mission planning with non-shared charging stations, we invoke the above MILP program to emulate the field experiment reported in Sec.~\ref{exp} with two drones. In the obtained solution, which is visualized in Fig.~\ref{fig:drone_muli_agent_map_n}, the first drone is assigned to visit three nearby sites while the other is dispatched to the last, distant site (UAV1:[S,1,2,3,S], UAV2:[S,4,S]). The charging station remained unused as both drones battery charge satisfied the minimum permissible SoC limit, which was set to $40\%$.

%used the MILP formulation to solve the real-world experiment from Fig.~\ref{fig:drone_experiment_map} with two UAVs using the same setup. The resulting tours, as shown in Fig.~\ref{fig:drone_muli_agent_map} demonstrates the effectiveness of Multi-UAV planning where one UAV is assigned to visit three close sites while the second UAV was assigned the distant site (UAV1:[S,1,2,3,S], UAV2:[S,4,S]).

\begin{figure}[htp]
\centering
%	\captionsetup[subfigure]{labelformat=empty}
\subfloat[]{%
  {\includegraphics[clip,width=0.9\columnwidth]{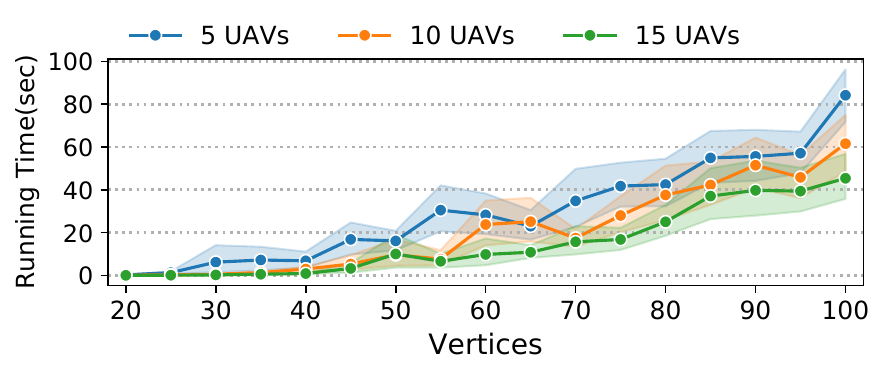}}%
}

\subfloat[]{%
  {\includegraphics[clip,width=0.9\columnwidth]{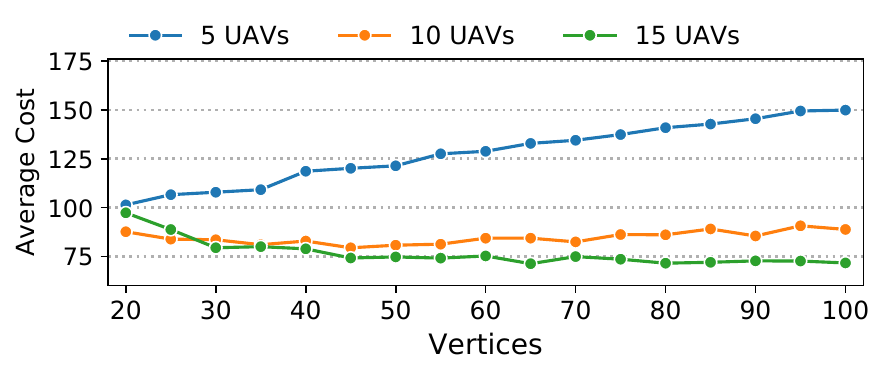}}%
}
\caption{Performance of the multi-UAV MILP program against the number of vertices: (a) Computational time across $30$ runs at 99\% confidence interval (b) Average tour cost per UAV.}
\label{fig:multi-agent-milp}
\end{figure}

\begin{figure}[htp]
    \centering
    {\includegraphics[width=\columnwidth]{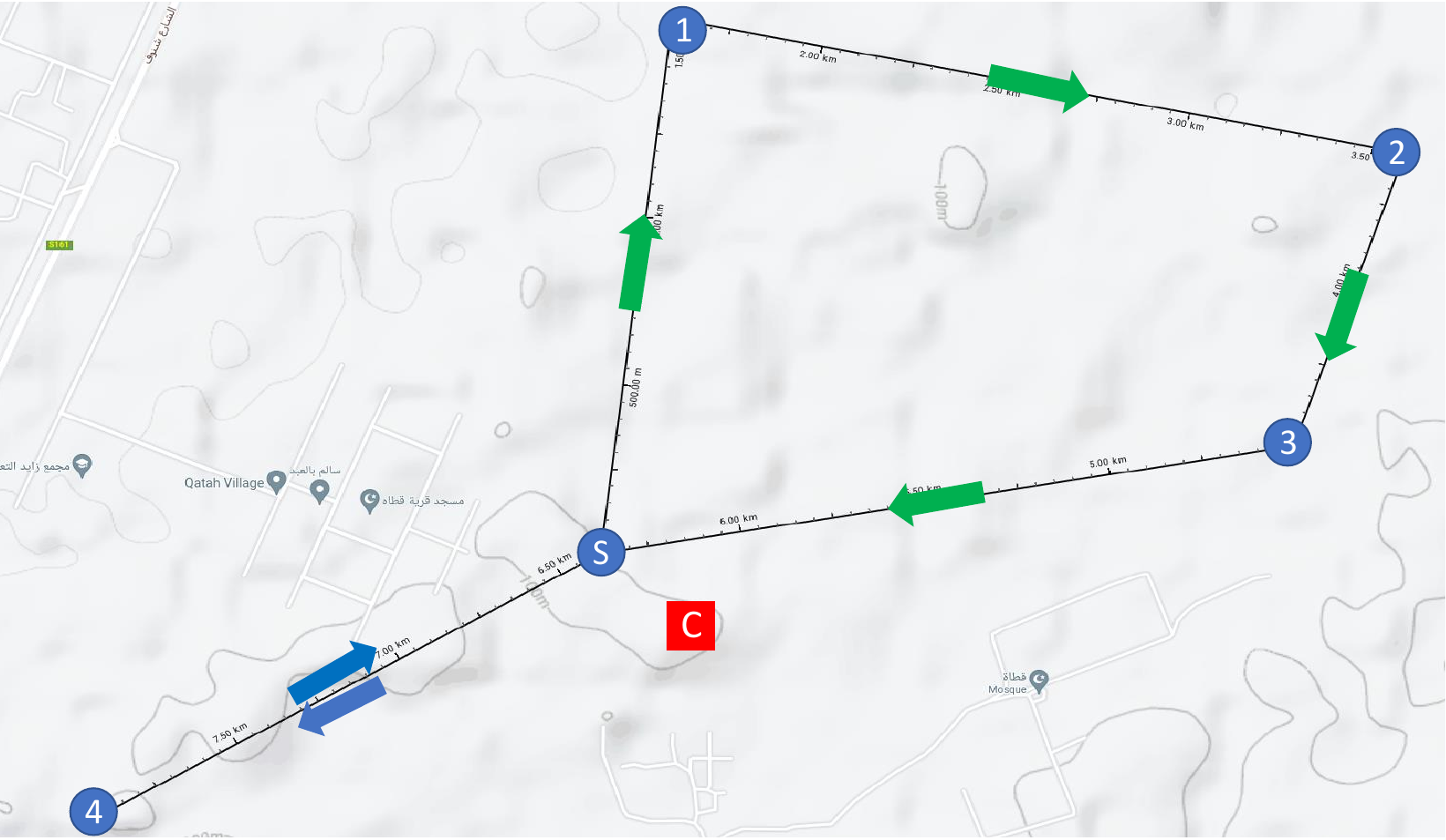}}
    \caption{Simulation of the real-world experiment in Sec.~\ref{exp} considering multi-UAV scenario with $2$ drones (the colored arrows trace the UAVs trajectories). }
    \label{fig:drone_muli_agent_map_n}
\end{figure}

\color{blue}

\color{blue}

\color{black}

\subsection{Proofs}

\begin{customlem}{\ref{lem:tot}}
In an optimal flight mission plan  $(\mathcal{F},b(\cdot))$, we have
%\footnote{For brevity, we use the notation: $[\underline{c},\overline{c}] d(\mathcal{F})+c'$ to mean the interval $[\underline{c}\cdot d(\mathcal{F})+c',\ \overline{c}\cdot d(\mathcal{F})+c']$} 
$$
\underline{c}\cdot d(\mathcal{F})+c' \le 
\tau(\mathcal{F}) + \tau_{\rm c}(b(\mathcal{F}))
\le \overline{c}\cdot d(\mathcal{F})+c'
$$
where either 
\begin{enumerate}

\item[1)]
$\underline{c}=\overline{c}=c_{a}$ and ${c}'=0$, or 

\item[2)] $\underline{c}=c_{a}+\underline{c}_{f}c_{b}\frac{\eta_{\rm d}}{\eta_{\rm c}}$, $\overline{c}=c_{a}+\overline{c}_{f}c_{b}\frac{\eta_{\rm d}}{\eta_{\rm c}}$, and $c'=\frac{c_{b}}{\eta_{\rm c}}(\underline{B}-x_0)$. 

\end{enumerate}
\end{customlem}
\begin{proof}
Consider an optimal flight plan $(\mathcal{F},b(\cdot))$ and assume that the charging stations, in the order they appear on $\mathcal{F}$, is $\mathcal{F}_{i_1},\ldots, \mathcal{F}_{i_r}$, where without loss of generality, we assume $\mathcal{F}_{i_1}\ne v_0$. 
For completeness, let $i_0\triangleq1$ and $i_{r+1}\triangleq|\mathcal{F}|$. For $j=0,1,\ldots,r$, let $$D_j\triangleq \eta_{\rm d}\sum_{k=i_{j}}^{i_{j+1}-1} c_{f}(\mathcal{F}_k,\mathcal{F}_{k+1}) \cdot d(\mathcal{F}_k,\mathcal{F}_{k+1}),$$ and for $j=1,\ldots,r$, let
$B_j\triangleq \eta_{\rm c}b(\mathcal{F}_{i_j})$.

Then, the feasibility of the flight mission plan $\mathcal{F}$ implies
\begin{equation}\label{e1}
I(r) \triangleq x_0-\sum_{k=0}^jD_k+\sum_{k=1}^jB_k\geq\underline{B}, ~\text{for }j=0,\ldots,r
\end{equation}

Let us refer to Ineq.~\eqref{e1} for a particular $j$ as $I(j)\ge \underline{B}$.  
Particularly, consider $I(r)\ge \underline{B}$. Suppose that this inequality is not tight, that is, the left-hand side is strictly larger than the right-hand side. Note that the variable $b(\mathcal{F}_{i_r})=\frac{B_r}{ \eta_{\rm c}}$ appears only in this inequality. Since $b(\mathcal{F}_{i_r})$ appears in the objective function $\tau_{\rm c}(b(\mathcal{F}))$ with a positive coefficient (i.e., $\tau_{\rm c}(b(u))=c_{b}b(u)$), there are two cases: (i) $b(\mathcal{F}_{i_r})=0$ at optimality, if $I(r)>\underline{B}$, or  (ii) $b(\mathcal{F}_{i_r})> 0$ at optimality,  if $I(r)=\underline{B}$. Otherwise, it will contradict to the optimality of $b(\mathcal{F}_{i_r})$, by reducing the value of $b(\mathcal{F}_{i_r})$. If it is case (i), then the inequality $I(r-1)\ge \underline{B}$ becomes redundant (as $I(r-1)\ge I(r)>0$). Removing $I(r-1)\ge \underline{B}$, the variable $b(\mathcal{F}_{i_{r-1}})$ appears only in $I(r)\ge \underline{B}$. Similarly, we conclude that  $b(\mathcal{F}_{i_{r-1}})=0$ and remove the (now) redundant inequality $I(r-2)\ge \underline{B}$. 

Continuing this argument, we conclude that there are two cases: (1) either all variables $b(\mathcal{F}_{i_{j}})$ are set to zero in which case the value of the objective is $\tau(\mathcal{F})=c_{a} d(\mathcal{F})$, or (2) we have
$$
x_0-\sum_{k=0}^rD_k+\sum_{k=1}^rB_k=\underline{B},
$$     
In case (2), the value of the objective is 

\begin{align*}
\tau(\mathcal{F})+\frac{c_{b}}{\eta_{\rm c}}\sum_{k=1}^{r} B_{k} &  =\tau(\mathcal{F})+\frac{c_{b}}{\eta_{\rm c}}(\underline{B}-x_0+\sum_{k=0}^rD_k)\\
 & = \tau(\mathcal{F})+\frac{c_{b}}{\eta_{\rm c}}\sum_{k=0}^rD_k+\frac{c_{b}}{\eta_{\rm c}}(\underline{B}-x_0)
\end{align*}
Therefore,
\[
\scalemath{0.78}{
\begin{aligned}
\left(c_{a}+\underline{c}_{f}c_{b}\frac{\eta_{\rm d}}{\eta_{\rm c}}\right)d(\mathcal{F})+\frac{c_{b}}{\eta_{\rm c}}(\underline{B}-x_0)  & \le \tau(\mathcal{F})+\frac{c_{b}}{\eta_{\rm c}}\sum_{k=0}^rD_k+\frac{c_{b}}{\eta_{\rm c}}(\underline{B}-x_0) \\
& \le \left(c_{a}+\overline{c}_{f}c_{b}\frac{\eta_{\rm d}}{\eta_{\rm c}}\right)d(\mathcal{F})+\frac{c_{b}}{\eta_{\rm c}}(\underline{B}-x_0).
\end{aligned} }
\]
\normalsize{}
%\vspace{-2mm}
\end{proof}
\begin{customlem}{\ref{lem:red}}
Given any feasible flight mission plan $(\mathcal{F},b(\cdot))$, there is another feasible flight mission plan $(\mathcal{F},b'(\cdot))$ such that 
$$
\tau_{\rm c}(b'(\mathcal{F})) \le \frac{\underline{B}-x_0}{\eta_{\rm c}}+\frac{\overline{c}_{f}\eta_{\rm d}}{\eta_{\rm c}} \cdot d(\mathcal{F})$$ 
Such a plan $(\mathcal{F},b'(\cdot))$ can be found in $O(|V|)$ time.
\end{customlem}
\begin{proof}
The proof follows from Lemma~\ref{lem:tot}, which uses algorithm~${\tt Fix\mbox{-}charge}$ to implement the argument used in Lemma~\ref{lem:tot} to find a feasible fight mission plan.  
\end{proof}

\begin{customthm}{\ref{thm:main}}
	The flight plan $(\mathcal{F},b'(\cdot))$ returned by algorithm ${\tt Find\mbox{-}plan}\big[V,d]$ has cost $$\tau(\mathcal{F})+\tau_{\rm c}(b'(\mathcal{F}))=O({\sf OPT}_{\sf DFP})+O(1).$$
\end{customthm}
\begin{proof}
Let $(\mathcal{F}^*,b^*(\cdot))$ be an optimal flight plan for {\sf DFP}. Clearly, this plan can be trivially turned into a feasible solution $(\mathcal{F}^*,x)$ for {\sf SDFP} by setting $x_k=\overline{B}$ for all $\mathcal{F}_k \in \mathcal{C}$. It follows that 
\begin{equation}
\label{e2}
{\sf OPT}_{\sf SDFP}\le \widehat{d}(\mathcal{F}^*).
\end{equation}
On the other hand, Lemma~\ref{lem:GS} implies that
\begin{equation}
\label{e2-}
\widehat{d}(\mathcal{F})\le \left(\frac{(1+\alpha+\alpha\beta) \log(|T|))}{1-\alpha}\right) {\sf OPT}_{\sf SDFP}.
\end{equation}
Lemma~\ref{lem:tot} also implies that
\begin{equation}
\label{e3}
{\sf OPT}_{\sf DFP}\ge \underline{c}\cdot d(\mathcal{F}^*)+{c}',
\end{equation}
while Lemma~\ref{lem:red} implies that
\begin{equation}
\label{e4}
b'(\mathcal{F})\le\frac{\underline{B}-x_0}{\eta_{\rm c}}+\frac{\overline{c}_{f}\eta_{\rm d}}{\eta_{\rm c}}d(\mathcal{F}),
\end{equation}
and the definition of $\widehat{d}(\cdot,\cdot)$ implies
\begin{equation}
\label{e5} 
\underline{c}_{f} d(\mathcal{F})\le \widehat{d}(\mathcal{F})\text{ and }\widehat{d}(\mathcal{F}^*)\le\overline{c}_{f} d(\mathcal{F}^*).
\end{equation}
Putting together~\eqref{e2},~\eqref{e2-},~\eqref{e3},~\eqref{e4}, and~\eqref{e5}, we obtain
\[
\scalemath{0.82}{
\begin{aligned}
\tau(\mathcal{F})+\tau_{\rm c}(b'(\mathcal{F}))& = c_{a} d(\mathcal{F})+c_{b}b'(\mathcal{F})\\
&\le \left(c_{a}+\overline{c}_{f}c_{b}\frac{\eta_{\rm d}}{\eta_{\rm c}}\right)d(\mathcal{F})+c_{b}
\frac{\underline{B}-x_0}{\eta_{\rm c}} \\
&\le  \Psi\frac{\overline{c}_{f}}{\underline{c}_{f}}\left(c_{a}+\overline{c}_{f}c_{b}\frac{\eta_{\rm d}}{\eta_{\rm c}}\right) 
\left(\frac{{\sf OPT}_{\sf DFP}-{c}'}{\underline{c}}\right) +c_{b} \frac{\underline{B}-x_0}{\eta_{\rm c}},
\end{aligned}
}
\]
\noindent where $\Psi \triangleq \left(\frac{(1+\alpha+\alpha\beta) \log(|T|))}{1-\alpha}\right)$.

%\vspace{-1mm}
\end{proof}

\end{document}